\documentclass{article}
\usepackage[T1]{fontenc}
\usepackage{iclr2027_conference,times}
\iclrfinalcopy

\usepackage{amsmath,amsfonts,bm}

\def\eqref#1{equation~\ref{#1}}

\def\1{\bm{1}}

\DeclareMathAlphabet{\mathsfit}{\encodingdefault}{\sfdefault}{m}{sl}
\SetMathAlphabet{\mathsfit}{bold}{\encodingdefault}{\sfdefault}{bx}{n}

\usepackage{algorithm}
\usepackage{algorithmic}
\usepackage{wrapfig}
\usepackage{amsthm}
\newtheorem{proposition}{Proposition}
\usepackage{booktabs}
\usepackage{threeparttable}
\usepackage{multirow}
\usepackage{colortbl}
\usepackage{arydshln}
\usepackage{bbding}
\usepackage{pifont}
\usepackage{enumitem}
\usepackage{needspace}

\usepackage{xcolor}
\definecolor{pink}{HTML}{747199}
\usepackage{hyperref}
\hypersetup{colorlinks=true, citecolor=pink, urlcolor=pink, linkcolor=pink}
\usepackage{url}
\usepackage{graphicx}
\usepackage{caption}
\usepackage{subcaption}
\usepackage{longtable}
\usepackage{tikz}
\usepackage[most]{tcolorbox}

\tcbuselibrary{breakable}
\tcbuselibrary{skins}
\newtcolorbox[
    use counter=tcboxcounter,number within=section
]{mybox}[3][]{
    left=3pt,
    right=4pt,
    breakable,
    enhanced,
    title=#2 \thetcbcounter: #3,
    #1
}
\newlength{\motivationfigureheight}

\title{Not All Criteria Matter: Importance-Aware Criteria Weighting for Rubric-Based LLM Evaluation}
\author{%
\begin{minipage}[t]{\dimexpr\textwidth-2\tabcolsep\relax}
\centering\normalfont
Haocheng Yang\textsuperscript{1}\quad
Licheng Pan\textsuperscript{2}\quad
Yiming Dong\textsuperscript{3}\quad
Xiaoxi Li\textsuperscript{4}\quad
Yuan Lu\textsuperscript{5}\\
Zhenhao Chen\textsuperscript{6}\quad
Jiajun Fan\textsuperscript{7}\quad
Shuai Shao\textsuperscript{8}\quad
Chunyuan Zheng\textsuperscript{9}\quad
Hao Wang\textsuperscript{6,}\thanks{Hao Wang is the corresponding author.}\\[0.8em]
{\small
\textsuperscript{1}National University of Singapore\quad
\textsuperscript{2}Zhejiang University\\
\textsuperscript{3}Alibaba Group\quad
\textsuperscript{4}Renmin University of China\quad
\textsuperscript{5}Xiaohongshu\\
\textsuperscript{6}Mohamed bin Zayed University of Artificial Intelligence\\
\textsuperscript{7}University of Illinois at Urbana-Champaign\\
\textsuperscript{8}Shanghai Jiaotong University\quad
\textsuperscript{9}Peking University}\\[0.4em]
{\small\texttt{haocheng\_yang@u.nus.edu, Ho-ward@outlook.com}}
\end{minipage}%
}

\begin{document}
\maketitle
\lhead{Preprint}

\begin{abstract}
Rubric-based evaluation assesses the responses of large language models (LLMs) by aggregating the verification results of explicit criteria, yet \textit{not all criteria matter equally}.
However, existing methods commonly aggregate the verification results with uniform weighting, which assigns minor and overlapping criteria the same weight as essential ones and thus produces misleading evaluation scores.
Although some methods prompt an LLM to directly assign weights, direct weighting requires a holistic judgment over all criteria that LLMs struggle to make reliably.
To address these limitations, we propose Importance-Aware Criteria Weighting (ICW), which derives the weights of criteria from pairwise importance comparisons made by an LLM.
Specifically, ICW transforms the holistic judgment over all criteria into a series of pairwise importance comparisons, checks and revises inconsistent comparisons, and derives the weights from the principal eigenvector of the comparison matrix.
Extensive experiments demonstrate that ICW consistently outperforms uniform weighting and competitive weighting methods, and translates into better policy models in downstream reinforcement learning.
Code is available at \url{https://github.com/yhc-666/Importance-Aware-Criteria-Weighting.git}.
\end{abstract}

\section{Introduction}
\label{sec:introduction}

Rubric-based evaluation, which assesses response quality against explicit criteria, has emerged as a prevalent paradigm for evaluating open-ended responses of large language models (LLMs)~\citep{hashemi2024llmrubric,li2025generation,zhang2026rubricbench}, with widespread applications in domains such as healthcare~\citep{arora2025healthbench}, deep research~\citep{du2025deepresearch}, and scholarly question answering~\citep{yifei2026researchqa}.
Specifically, an LLM-based verifier judges whether a response satisfies each criterion, and the verification results are aggregated into an evaluation score.

To improve rubric-based evaluation, existing works mainly follow two lines.
One line of work studies rubric generation, where criteria are written by experts~\citep{arora2025healthbench}, generated by LLMs for each user prompt~\citep{cook2024tick,li2026rubrichub}, and further refined to better distinguish between responses~\citep{zhang2026chasing,shen2026rethinking}.
A further line of work uses the evaluation score as a reward signal in reinforcement learning (RL) for open-ended tasks without verifiable answers~\citep{gunjal2026rubrics,viswanathan2025checklists,shan2026rubricrlsurvey}.

Despite the progress in both lines of work, existing methods commonly aggregate the verification results with uniform weighting~\citep{qin2024infobench,cook2024tick,lee2025checkeval}, which implicitly assumes that all criteria are equally important.
However, \textit{not all criteria matter equally}: some criteria are essential for addressing the user prompt, whereas others only concern minor aspects such as formatting or tone, and several criteria may describe the same property of a response.
By assigning minor and overlapping criteria the same weight as essential ones, uniform weighting can produce misleading evaluation scores that favor lower-quality responses.
While some methods prompt an LLM to directly assign weights to criteria~\citep{du2025deepresearch,gunjal2026rubrics,zhang2026chasing,ding2026adarubric}, direct weighting requires joint judgments over the rubric set and numerical weight assignments. Studies of LLM-based text evaluation report anchoring effects in multi-attribute judgments and inconsistent numerical ratings~\citep{stureborg2024large}.
These findings motivate examining alternatives to uniform weighting and direct numerical weight elicitation.

To this end, we propose Importance-Aware Criteria Weighting (ICW), which derives the weights of criteria from pairwise importance comparisons made by an LLM. This design is motivated by evidence for the effectiveness of pairwise judgments in text evaluation and ranking~\citep{liusie2024llm,liu2024aligning,qin2024large}.
Specifically, ICW transforms the holistic judgment over all criteria into a series of pairwise importance comparisons, each of which only requires the LLM to judge which of two criteria is more important for the user prompt, and derives the weights from the principal eigenvector of the resulting comparison matrix.
Since independent comparisons may contradict one another, ICW further checks the consistency of the comparisons and prompts the LLM to revise inconsistent comparisons.
Experimental results confirm that ICW produces more accurate evaluation scores and better policy models in downstream RL.

\paragraph{Contributions.} The contributions of this work are summarized as  follows.
\ding{182} \textbf{We identify the unequal importance of criteria as an under-addressed challenge in rubric-based evaluation.} Since not all criteria matter equally, uniform weighting can produce misleading evaluation scores.
\ding{183} \textbf{We propose ICW, which derives the weights of criteria from pairwise importance comparisons made by an LLM.} ICW derives the weights from the principal eigenvector of the comparison matrix, and checks and revises inconsistent comparisons.
\ding{184} \textbf{We conduct comprehensive experiments on six pairwise preference benchmarks to evaluate the efficacy of ICW,} showing  ICW consistently outperforms uniform weighting and competitive weighting methods and yields better policy models in downstream RL.

\section{Preliminaries}
\label{sec:preliminaries}

\subsection{Rubric-Based Evaluation}
\label{subsec:rubric_evaluation}

Rubric-based evaluation assesses response quality against explicit criteria~\citep{hashemi2024llmrubric,arora2025healthbench,du2025deepresearch}.
Given a user prompt $x$ and an LLM-generated response $y$, we define a rubric set $\mathcal{R}=\{c_1,\ldots,c_K\}$ comprising $K$ readily verifiable criteria.
Each criterion is a textual statement characterizing a property of an ideal response.
Suppose $v(\cdot)$ is an LLM-based verifier that outputs $v(x,y,c_k)\in\{0,1\}$ to indicate whether $y$ satisfies $c_k$ given $x$.
The evaluation score is defined as:
\begin{equation}
    s(x,y)=\sum_{k=1}^{K}w_k\,v(x,y,c_k),
    \label{eq:rubric_score}
\end{equation}
where $\boldsymbol{w}=(w_1,\ldots,w_K)$ denotes the weights reflecting the relative importance of the criteria, with $w_k\geq 0$ and $\sum_{k=1}^{K}w_k=1$.
By aggregating verification results across criteria, $s(x,y)$ provides an overall measure of response quality for downstream applications.
For data curation, $s(x,y)$ can be used to rank candidate responses and select high-quality examples~\citep{li2026rubrichub}; in RL~\citep{pan2026optimal,pan2026uncertainty,wang2026causalrm,wangunbiased}, it can serve as a reward signal to optimize the policy model~\citep{gunjal2026rubrics,viswanathan2025checklists,zhang2026chasing}.

\subsection{Problem Definition}
\label{subsec:problem_definition}

This work investigates the rubric weighting problem, which aims to estimate the weights $\boldsymbol{w}$ in Eq.~(\ref{eq:rubric_score}) that reflect the relative importance of the criteria.
Specifically, $\boldsymbol{w}$ is estimated from the user prompt $x$ and the fixed rubric set $\mathcal{R}$ alone, and is shared across candidate responses to aggregate their verification results.
\textbf{Thus, the core problem is to estimate $\boldsymbol{w}$ from $x$ and $\mathcal{R}$ such that $s(x,y)$ ranks candidate responses consistently with human preference.}

\section{Methodology}
\label{sec:methodology}

\subsection{Motivation}
\label{sec:motivation}

Rubric weighting is central to rubric-based evaluation.
Given the verification results, the weights $\boldsymbol{w}$ in Eq.~(\ref{eq:rubric_score}) determine how candidate responses are ranked, and thus which responses are selected in data curation~\citep{li2026rubrichub} and rewarded in RL~\citep{gunjal2026rubrics}.
Therefore, accurate rubric weighting is essential for reliable rubric-based evaluation and its downstream applications.

A simple solution to the rubric weighting problem is uniform weighting, which assigns $w_k=1/K$ to every criterion.
However, \textit{not all criteria matter equally.}
Specifically, some criteria are essential for addressing the user prompt, whereas others only concern minor aspects of a response, such as formatting or tone, and several criteria may describe the same property of a response.
By assigning minor and overlapping criteria the same weight as essential ones, uniform weighting allows several minor criteria to jointly outweigh an essential criterion and overlapping criteria to give repeated credit to the same property.
Consequently, a lower-quality response may receive a higher evaluation score than a higher-quality response.

\begin{figure}[htbp]
    \centering
    \captionsetup[subfigure]{font=normalsize,skip=2pt}
    \settoheight{\motivationfigureheight}{\includegraphics[width=0.49\linewidth]{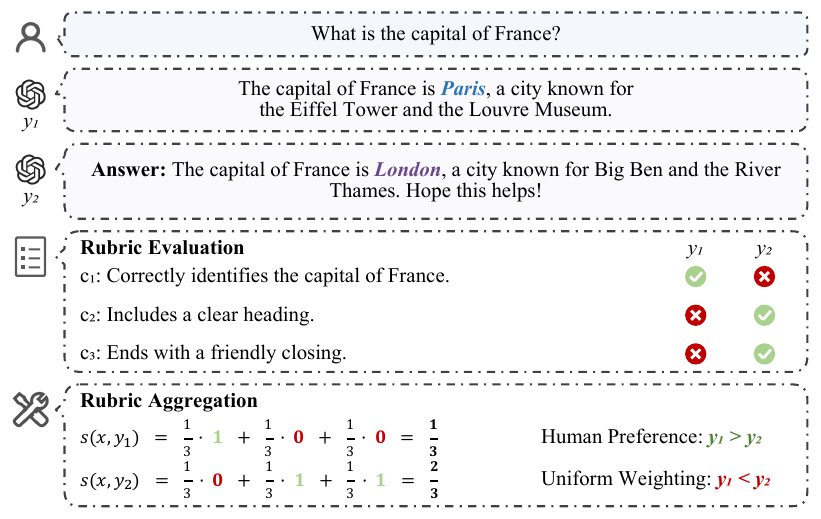}}
    \begin{subfigure}[t]{0.49\linewidth}
        \centering
        \includegraphics[width=\linewidth]{plots/case_study_unequal_importance.pdf}
        \caption{}
        \label{fig:motivation_importance}
    \end{subfigure}%
    \hfill
    \resizebox{!}{\motivationfigureheight}{%
        \begin{tikzpicture}
            \draw[dash pattern=on 14pt off 10.5pt, line width=2pt] (0,0) -- (0,8);
        \end{tikzpicture}%
    }%
    \hfill
    \begin{subfigure}[t]{0.49\linewidth}
        \centering
        \includegraphics[width=\linewidth]{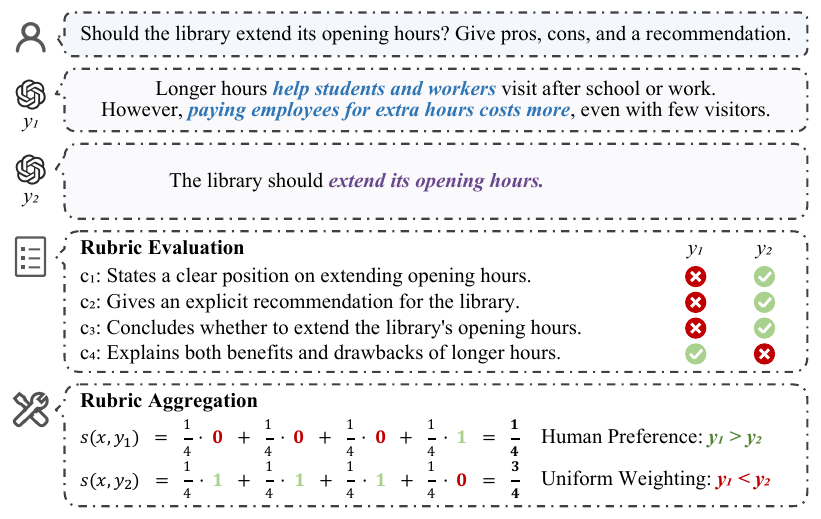}
        \caption{}
        \label{fig:motivation_overlap}
    \end{subfigure}
    \caption{Case studies in which uniform weighting assigns higher evaluation scores to lower-quality responses. (a) Minor stylistic criteria outweigh the essential criterion of factual correctness. (b) Overlapping criteria give repeated credit to the same recommendation.}
    \label{fig:motivation_case_studies}
\end{figure}

\textbf{Case studies.} To further substantiate the above limitation, we conduct case studies as follows.
In Figure~\ref{fig:motivation_case_studies}(\subref*{fig:motivation_importance}), several minor criteria outweigh an essential criterion: although $y_1$ correctly answers that the capital of France is Paris, the incorrect $y_2$ receives a higher evaluation score because $y_2$ includes a heading and a friendly closing.
In Figure~\ref{fig:motivation_case_studies}(\subref*{fig:motivation_overlap}), overlapping criteria give repeated credit to the same property: although $y_1$ explains both benefits and drawbacks of longer hours, the less informative $y_2$ receives a higher evaluation score because the recommendation of $y_2$ satisfies three overlapping criteria.
\textbf{Therefore, since not all criteria matter equally, uniform weighting can produce misleading evaluation scores that favor lower-quality responses.}

\begin{figure}[t]
    \centering
    \includegraphics[width=\linewidth]{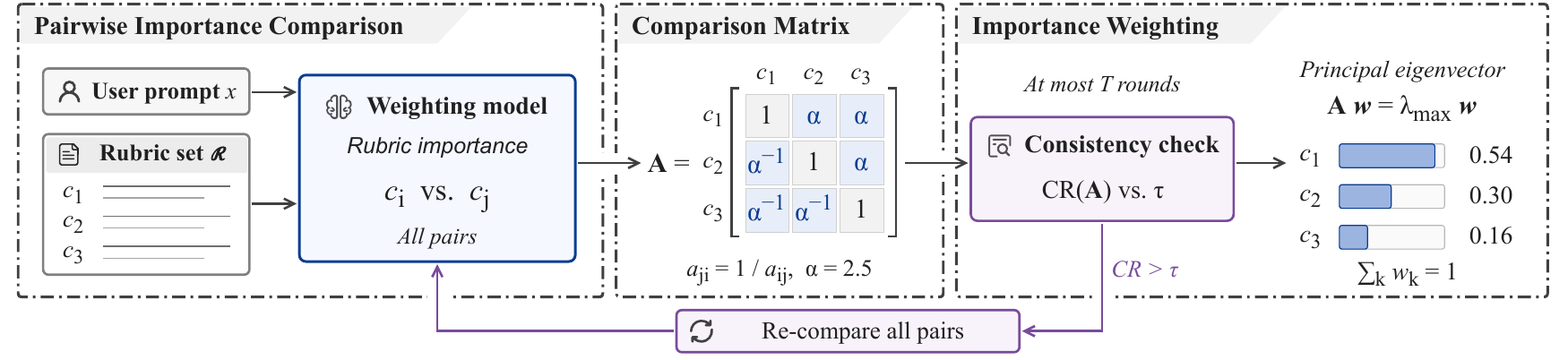}
    \caption{Overview of ICW. The weighting model compares the importance of every pair of criteria given the user prompt and the rubric set, and the judgments are mapped to a comparison matrix with the fixed ratio $\alpha$. If CR exceeds $\tau$, all pairs are re-compared for at most $T$ rounds, and a revised matrix replaces the current matrix only if its CR is strictly lower. The weights are obtained from the principal eigenvector of the current matrix.}
    \label{fig:method_overview}
\end{figure}

\subsection{Pairwise Importance Comparison}
\label{subsec:pairwise_comparison}

To account for the unequal importance of criteria, a natural approach is to prompt an LLM to directly assign numerical weights or importance levels to criteria~\citep{du2025deepresearch,viswanathan2025checklists,gunjal2026rubrics,zhang2026chasing,ding2026adarubric,li2026rubrichub}.
This approach requires the LLM to express its judgments over the rubric set as weights or importance levels in a single response.
Studies of LLM-based text evaluation report anchoring effects in multi-attribute judgments and inconsistent numerical ratings~\citep{stureborg2024large}.
Prior work also reports improvements from pairwise comparisons over absolute scoring in text evaluation~\citep{liusie2024llm,liu2024aligning}.
Motivated by these findings, we transform the holistic judgment over all criteria into a series of pairwise importance comparisons, each of which only requires judging which of two criteria is more important.

To this end, given the user prompt $x$ and the rubric set $\mathcal{R}$, we prompt an LLM, referred to as the weighting model, to judge which of two criteria $c_i,c_j\in\mathcal{R}$ is more important for evaluating responses to $x$, together with a brief reason.
The weighting model may also judge that neither criterion is more important when the two criteria overlap or are equally important.
Formally, we denote the judgment when $c_i$ is displayed before $c_j$ by $o_{ij}=\operatorname{Compare}(x,\mathcal{R},c_i,c_j)\in\{i,j,0\}$, where $o_{ij}=i$ or $o_{ij}=j$ indicates the more important criterion and $o_{ij}=0$ indicates that neither is more important.

However, LLM judgments are sensitive to the order in which two candidates are displayed~\citep{zheng2023judging}.
To mitigate this position bias, we query each pair in both display orders and regard a criterion as more important only when the two judgments agree.
We use a positive reciprocal comparison matrix~\citep{saaty1977scaling}. To populate it from ordinal judgments, we encode each accepted comparison with a fixed ratio $\alpha>1$, so the weighting model only needs to identify the more important criterion without estimating a numerical intensity.
Combining the two designs, we define the comparison of $c_i$ against $c_j$ as:
\begin{equation}
    a_{ij}=
    \begin{cases}
        \alpha, & \text{if } o_{ij}=o_{ji}=i,\\
        1/\alpha, & \text{if } o_{ij}=o_{ji}=j,\\
        1, & \text{otherwise},
    \end{cases}
    \label{eq:comparison}
\end{equation}
with $a_{ii}=1$.
In particular, $a_{ij}=1$ when neither criterion is judged more important or when the two display orders disagree, so that an order-dependent judgment favors neither criterion.
The resulting comparison matrix $\mathbf{A}=(a_{ij})\in\mathbb{R}_{>0}^{K\times K}$ is positive and reciprocal, i.e., $a_{ji}=1/a_{ij}$, and summarizes the relative importance between every pair of criteria for $x$.

\subsection{Importance Weighting via Principal Eigenvector}
\label{subsec:eigenvector_weighting}

In this section, we derive the weights $\boldsymbol{w}$ from the comparison matrix $\mathbf{A}$.
The key idea is that $a_{ij}$ specifies how the weight of $c_i$ should relate to that of $c_j$; for example, $a_{ij}=\alpha$ means that $c_i$ should receive $\alpha$ times the weight of $c_j$.
Therefore, each comparison should ideally equal the ratio between the weights of the two criteria, i.e., $a_{ij}=w_i/w_j$.
Since $(w_i/w_j)(w_j/w_\ell)=w_i/w_\ell$, the ratio $a_{ij}=w_i/w_j$ can hold for all pairs only if the direct comparison of any two criteria agrees with their indirect comparison through any other criterion.
Formally, we call $\mathbf{A}$ consistent if $a_{i\ell}=a_{ij}a_{j\ell}$ for all $i,j,\ell$.
Proposition~\ref{prop:eigenvector} shows that consistency is also sufficient for such weights to exist, and that the weights are then given by the principal eigenvector of $\mathbf{A}$.

\begin{proposition}[\citealp{saaty1977scaling}]
\label{prop:eigenvector}
A positive reciprocal matrix $\mathbf{A}\in\mathbb{R}_{>0}^{K\times K}$ is consistent if and only if there exist weights $\boldsymbol{w}$ with $w_k>0$ and $\sum_{k=1}^{K}w_k=1$ such that $a_{ij}=w_i/w_j$ for all $i,j$.
In this case, $\boldsymbol{w}$ is unique and is the principal eigenvector of $\mathbf{A}$ with eigenvalue $\lambda_{\max}=K$.
\end{proposition}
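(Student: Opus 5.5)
The two directions of the equivalence are elementary, so the plan is to verify them directly and then handle uniqueness and the eigenvector claim. For the ``if'' direction, suppose $a_{ij}=w_i/w_j$ for all $i,j$ with every $w_k>0$. Then
\[
a_{ij}a_{j\ell}=\frac{w_i}{w_j}\cdot\frac{w_j}{w_\ell}=\frac{w_i}{w_\ell}=a_{i\ell},
\]
so $\mathbf{A}$ is consistent. For the ``only if'' direction, assume $\mathbf{A}$ is consistent and fix the first column. Define $u_i=a_{i1}>0$ and normalize, $w_i=u_i/\sum_k u_k$. Consistency together with reciprocity gives
\[
a_{ij}=a_{i1}a_{1j}=\frac{a_{i1}}{a_{j1}}=\frac{u_i}{u_j}=\frac{w_i}{w_j},
\]
and the normalization only rescales numerator and denominator by the same factor. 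Here $a_{1j}=1/a_{j1}$ holds because $\mathbf{A}$ is reciprocal; $a_{11}=1$ also follows from consistency, since $a_{11}=a_{11}^2$ with $a_{11}>0$.

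For uniqueness, suppose $\boldsymbol{w}$ and $\boldsymbol{w}'$ both satisfy the ratio condition. Then $w_i/w_j=w'_i/w'_j$ for all $i,j$, so $w_i/w'_i=w_j/w'_j$ is a common constant. Since both vectors sum to one, that constant equals $1$, and hence $\boldsymbol{w}=\boldsymbol{w}'$.

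For the eigenvector claim, write $\mathbf{A}=\boldsymbol{w}\,\boldsymbol{v}^\top$ with $v_j=1/w_j$. This shows $\mathbf{A}$ has rank one, and
\[
\mathbf{A}\boldsymbol{w}=\boldsymbol{w}\,(\boldsymbol{v}^\top\boldsymbol{w})=K\boldsymbol{w}.
\]
The other eigenvalues of a rank-one matrix are all $0$, and $\operatorname{tr}\mathbf{A}=K$, so $K$ is the only nonzero eigenvalue and therefore $\lambda_{\max}=K$.

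The step needing the most care is justifying the word ``principal'', which means matching $\boldsymbol{w}$ with the Perron eigenvector. I would invoke the Perron--Frobenius theorem: for a positive matrix, the spectral radius is a simple eigenvalue, and its eigenvector can be chosen positive and is unique up to scaling. Since $\boldsymbol{w}>0$ is an eigenvector for $K$, and $K$ is the spectral radius by the rank-one computation, $\boldsymbol{w}$ is the principal eigenvector normalized to sum to one. Alternatively, one can avoid Perron--Frobenius entirely: the eigenspace for $K$ is the column space of the rank-one matrix, which is $\operatorname{span}(\boldsymbol{w})$, so it is one-dimensional, and the sum-to-one normalization then pins down $\boldsymbol{w}$.
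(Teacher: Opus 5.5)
Your proof is correct and follows the paper's argument for the equivalence and for uniqueness: the first-column construction $w_i\propto a_{i1}$ via consistency and reciprocity, then proportionality plus normalization. The only difference is the last step. The paper computes $\mathbf{A}\boldsymbol{w}=K\boldsymbol{w}$ row by row and invokes Perron--Frobenius to conclude that $K$ is the largest eigenvalue and $\boldsymbol{w}$ the principal eigenvector. Your rank-one factorization $\mathbf{A}=\boldsymbol{w}\boldsymbol{v}^\top$ instead gives the full spectrum $\{K,0,\ldots,0\}$ and the one-dimensional $K$-eigenspace directly, so Perron--Frobenius becomes optional.
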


The proof is provided in Appendix~\ref{app:proofs}.
Therefore, it is natural to obtain the weights from the principal eigenvector of $\mathbf{A}$:
\begin{equation}
    \mathbf{A}\boldsymbol{w}=\lambda_{\max}\boldsymbol{w},
    \qquad \sum_{k=1}^{K}w_k=1,
    \qquad w_k>0,
    \label{eq:eigen_weights}
\end{equation}
where $\lambda_{\max}$ is the largest eigenvalue of $\mathbf{A}$.
By the Perron--Frobenius theorem~\citep{horn2012matrix}, $\boldsymbol{w}$ exists and is unique for any positive matrix, so the weights are well defined even when $\mathbf{A}$ is inconsistent.
Moreover, by Proposition~\ref{prop:eigenvector}, when the comparisons are consistent, a criterion judged more important than another receives $\alpha$ times the weight of the other criterion, and $\alpha=1$ recovers uniform weighting.
In this way, the weights aggregate the pairwise importance comparisons of all criteria, so that criteria judged more important receive larger weights and the evaluation score in Eq.~(\ref{eq:rubric_score}) no longer treats all criteria equally.

\subsection{Consistency Check and Revision}
\label{subsec:consistency_revision}

Although the principal eigenvector yields weights from any comparison matrix, the weights reproduce all comparisons, i.e., $a_{ij}=w_i/w_j$ for all $i,j$, only when $\mathbf{A}$ is consistent (Proposition~\ref{prop:eigenvector}).
Since each pair is compared separately, the comparisons may be inconsistent with each other.
For example, if $c_i$ is judged more important than $c_j$, $c_j$ more important than $c_\ell$, and $c_\ell$ more important than $c_i$, no weights can reproduce the three comparisons.
Note that since all accepted comparisons share the ratio $\alpha$, even transitive judgments can be inconsistent, e.g., $a_{ij}a_{j\ell}=\alpha^2\neq\alpha=a_{i\ell}$ when $c_i$, $c_j$, and $c_\ell$ are judged in decreasing importance.
Nevertheless, the principal eigenvector preserves the order of any transitive judgments (Proposition~\ref{prop:order}, Appendix~\ref{app:order}), whereas contradictory judgments such as the above cycle cannot be preserved by any weights; therefore, we use the degree of inconsistency as a relative measure of contradictory comparisons rather than requiring strict consistency.
In this section, we introduce consistency check and revision to detect and revise inconsistent comparisons.
The consistency check builds on Proposition~\ref{prop:consistency}, which characterizes consistency through the largest eigenvalue $\lambda_{\max}$.

\begin{proposition}[\citealp{saaty1977scaling}]
\label{prop:consistency}
For any positive reciprocal matrix $\mathbf{A}\in\mathbb{R}_{>0}^{K\times K}$, $\lambda_{\max}\geq K$, with equality if and only if $\mathbf{A}$ is consistent.
\end{proposition}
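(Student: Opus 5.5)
We will use the Perron--Frobenius theorem, which gives a strictly positive eigenvector $\boldsymbol{w}$ of $\mathbf{A}$ with $\mathbf{A}\boldsymbol{w}=\lambda_{\max}\boldsymbol{w}$. The idea is to measure how far each entry $a_{ij}$ is from the ratio $w_i/w_j$, and to apply AM--GM to the reciprocal pairs.

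Fix this Perron vector $\boldsymbol{w}$ with all $w_k>0$. Define $\varepsilon_{ij}=a_{ij}w_j/w_i>0$. Reciprocity $a_{ji}=1/a_{ij}$ gives $\varepsilon_{ji}=1/\varepsilon_{ij}$, and $\varepsilon_{ii}=1$. Reading the $i$-th row of the eigen-equation gives $\lambda_{\max}=\sum_j a_{ij}w_j/w_i=\sum_j\varepsilon_{ij}$ for each $i$. Summing over $i$ and dividing by $K$ yields
\begin{equation*}
K\lambda_{\max}=\sum_{i,j}\varepsilon_{ij}=K+\sum_{i<j}\left(\varepsilon_{ij}+\varepsilon_{ij}^{-1}\right).
\end{equation*}
Each term $\varepsilon_{ij}+\varepsilon_{ij}^{-1}\geq 2$ by AM--GM, with equality iff $\varepsilon_{ij}=1$. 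There are $K(K-1)/2$ such pairs, so $K\lambda_{\max}\geq K+K(K-1)=K^2$. Hence $\lambda_{\max}\geq K$.

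For the equality case, $\lambda_{\max}=K$ forces $\varepsilon_{ij}=1$ for every pair. That means $a_{ij}=w_i/w_j$ for all $i,j$, and then $a_{ij}a_{j\ell}=w_i/w_\ell=a_{i\ell}$, so $\mathbf{A}$ is consistent. Conversely, if $\mathbf{A}$ is consistent, Proposition~\ref{prop:eigenvector} gives weights with $a_{ij}=w_i/w_j$ and $\lambda_{\max}=K$. This can also be checked directly: $(\mathbf{A}\boldsymbol{w})_i=\sum_j w_i=Kw_i$. A positive eigenvector must be the Perron vector, so this eigenvalue is $\lambda_{\max}$.

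The step needing the most care is justifying that the eigenvector used is strictly positive and corresponds to $\lambda_{\max}$. Without positivity the ratios $\varepsilon_{ij}$ are not defined and AM--GM does not apply. For a positive matrix, Perron--Frobenius supplies exactly this. In the converse direction we also use the fact that the only eigenvalue with a positive eigenvector is the spectral radius. Everything else is the routine AM--GM computation.
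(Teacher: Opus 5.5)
Your proof is correct and follows the paper's argument essentially verbatim. Both use the Perron vector, the ratios $\varepsilon_{ij}=a_{ij}w_j/w_i$, the row identities $\lambda_{\max}=\sum_j\varepsilon_{ij}$, and the bound $t+1/t\ge 2$, with the converse obtained from Proposition~\ref{prop:eigenvector} and Perron--Frobenius. One small slip: the phrase ``and dividing by $K$'' should be dropped, since summing the row identities over $i$ already gives $K\lambda_{\max}$.
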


The proof is provided in Appendix~\ref{app:proofs}.
Therefore, the gap between $\lambda_{\max}$ and $K$ quantifies the inconsistency of $\mathbf{A}$.
Following~\citet{saaty1977scaling}, we normalize the gap into the consistency index (CI), which can be rewritten as (see Appendix~\ref{app:proofs}):
\begin{equation}
    \mathrm{CI}(\mathbf{A})=\frac{\lambda_{\max}-K}{K-1}=\frac{1}{K(K-1)}\sum_{i<j}\left(\epsilon_{ij}+\frac{1}{\epsilon_{ij}}-2\right),
    \qquad \epsilon_{ij}=\frac{a_{ij}w_j}{w_i},
    \label{eq:consistency_index}
\end{equation}
where $\boldsymbol{w}$ is the principal eigenvector in Eq.~(\ref{eq:eigen_weights}).
Since $\epsilon_{ij}+1/\epsilon_{ij}-2\geq0$ with equality if and only if $a_{ij}=w_i/w_j$, CI aggregates the deviations between the comparisons and the ratios of the obtained weights over all pairs, i.e., CI measures how well the weights reproduce the comparisons.
Since the CI of random comparisons grows with $K$, we further adopt the consistency ratio (CR), a heuristic normalization of CI across rubric sets of different sizes:
\begin{equation}
    \mathrm{CR}(\mathbf{A})=\frac{\mathrm{CI}(\mathbf{A})}{\mathrm{RI}_K},
    \label{eq:consistency}
\end{equation}
where $\mathrm{RI}_K$ is the average CI of randomly generated positive reciprocal matrices of size $K$~\citep{saaty1980analytic} (see Appendix~\ref{app:implementation} for $K>15$).\footnote{For $K\le2$, every positive reciprocal matrix is consistent, and we set $\mathrm{CR}(\mathbf{A})=0$.}
By Proposition~\ref{prop:consistency}, $\mathrm{CR}(\mathbf{A})=0$ if and only if $\mathbf{A}$ is consistent.
We accept $\mathbf{A}$ if $\mathrm{CR}(\mathbf{A})\le\tau$, where $\tau$ is the consistency threshold.

Otherwise, we revise the comparisons for at most $T$ rounds, starting from $\mathbf{A}$ as the current matrix.
In each round, the weighting model re-compares every pair in both display orders given the judgments and reasons of the previous round, denoted by $\mathcal{H}$, i.e., $o'_{ij}=\operatorname{Compare}(x,\mathcal{R},c_i,c_j;\mathcal{H})$, so that each pair is re-examined in light of the judgments of the other pairs.
The revised judgments form a revised matrix $\mathbf{A}'$ via Eq.~(\ref{eq:comparison}), which replaces the current matrix $\mathbf{A}$ only if $\mathrm{CR}(\mathbf{A}')<\mathrm{CR}(\mathbf{A})$.
The revision stops once $\mathrm{CR}(\mathbf{A})\le\tau$ or after $T$ rounds, and the weights are obtained from the current matrix $\mathbf{A}$ via Eq.~(\ref{eq:eigen_weights}).
In this way, the revision never increases the inconsistency of the comparisons.

\subsection{The Workflow of ICW}
\label{subsec:workflow}

\begin{algorithm}[t]
\caption{ICW}
\label{alg:icw}
\small
{\raggedright
\textbf{Input}: user prompt $x$; rubric set $\mathcal{R}=\{c_1,\ldots,c_K\}$.\par
\textbf{Parameter}: fixed ratio $\alpha$; consistency threshold $\tau$; maximum rounds $T$.\par
\textbf{Output}: weights $\boldsymbol{w}$.\par}
\setboolean{ALC@noend}{true}
\begin{algorithmic}[1]
\FOR{each pair $(c_i,c_j)$ with $i<j$}
    \STATE $o_{ij}\leftarrow\operatorname{Compare}(x,\mathcal{R},c_i,c_j)$
    \STATE $o_{ji}\leftarrow\operatorname{Compare}(x,\mathcal{R},c_j,c_i)$
\ENDFOR
\STATE Construct $\mathbf{A}$ via Eq.~(\ref{eq:comparison}); $t\leftarrow0$
\WHILE{$\mathrm{CR}(\mathbf{A})>\tau$ \textbf{and} $t<T$}
    \STATE $t\leftarrow t+1$
    \STATE Re-compare all pairs given $\mathcal{H}$ (steps 1--3)
    \STATE Construct $\mathbf{A}'$ via Eq.~(\ref{eq:comparison})
    \STATE \textbf{if} $\mathrm{CR}(\mathbf{A}')<\mathrm{CR}(\mathbf{A})$ \textbf{then} $\mathbf{A}\leftarrow\mathbf{A}'$
\ENDWHILE
\STATE Obtain $\boldsymbol{w}$ from $\mathbf{A}$ via Eq.~(\ref{eq:eigen_weights})
\STATE \textbf{return} $\boldsymbol{w}$
\end{algorithmic}
\end{algorithm}

In this section, we present the workflow of ICW, as illustrated in Figure~\ref{fig:method_overview} and Algorithm~\ref{alg:icw}.
First, the weighting model compares every pair of criteria in both display orders (steps 1--3), and the judgments are mapped to the comparison matrix $\mathbf{A}$ via Eq.~(\ref{eq:comparison}) (step 4).
Then, while CR exceeds $\tau$, all pairs are re-compared given the judgments and reasons of the previous round for at most $T$ rounds, and each revised matrix replaces the current matrix only if its CR is strictly lower (steps 5--9).
Finally, the weights are obtained from the principal eigenvector of the current matrix via Eq.~(\ref{eq:eigen_weights}) (step 10).
Each round involves $K(K-1)$ queries to the weighting model, which are independent of each other and can be issued in parallel.
Since $\boldsymbol{w}$ depends only on $x$ and $\mathcal{R}$, it is computed once for each user prompt and shared across all candidate responses, so that the cost of ICW does not grow with the number of candidate responses.

\section{Experiments}
\label{sec:experiments}

In this section, we conduct a systematic empirical evaluation to demonstrate the efficacy of ICW, covering five research questions (RQs):
\begin{enumerate}[leftmargin=*]
    \item \textbf{RQ1}: \textit{Does ICW perform well?} In Section~\ref{subsec:main_results}, we compare ICW against uniform weighting and competitive weighting methods on six benchmarks.
    \item \textbf{RQ2}: \textit{Why does it work?} In Section~\ref{subsec:ablation}, we perform an ablation study on the contribution of pairwise importance comparison and consistency check and revision.
    \item \textbf{RQ3}: \textit{Does it generalize across verifiers and weighting models?} In Section~\ref{subsec:generalization}, we evaluate its compatibility with different verifiers and weighting models.
    \item \textbf{RQ4}: \textit{Is it sensitive to hyperparameters?} In Section~\ref{subsec:hyperparameter}, we analyze its sensitivity to $\alpha$ and $\tau$.
    \item \textbf{RQ5}: \textit{Does it improve downstream RL?} In Section~\ref{subsec:downstream_rl}, we validate whether the improved evaluation scores translate into better policy models.
\end{enumerate}

\subsection{Experimental Setup}
\label{subsec:setup}

\textbf{Datasets and Rubrics.}
We evaluate on six pairwise preference benchmarks, where each sample consists of a user prompt, two candidate responses, and a preference label: RubricBench~\citep{zhang2026rubricbench}, ResearchQA~\citep{yifei2026researchqa}, JudgeBench~\citep{tan2025judgebench}, the Chat subset of RM-Bench~\citep{liu2025rmbench} (RM-Chat), the Chat-Hard subset of RewardBench~\citep{lambert2025rewardbench} (Chat-Hard), and the Precise-IF subset of RewardBench~2~\citep{malik2026rewardbench2} (Precise-IF).
For RubricBench and ResearchQA, we use the rubric sets provided by the datasets.
For the other four benchmarks, we use the rubric sets generated by the rubric generator of Rubric-ARM~\citep{xu2026rubricarm}.
The weights provided by the rubric sources are removed, so that all methods weight the same criteria for each user prompt.
More details are provided in Appendix~\ref{app:dataset}.

\textbf{Baselines.}
We compare ICW against diverse rubric weighting methods, grouped into:
(1) \textbf{Direct Weighting Methods}, which prompt an LLM to assign weights or importance levels to all criteria at once, including Direct, AdaRubric~\citep{ding2026adarubric}, RaR~\citep{gunjal2026rubrics}, RACE-SC~\citep{du2025deepresearch}, and ChasingTail~\citep{zhang2026chasing};
and (2) \textbf{Indirect Weighting Methods}, which derive the weights from pairwise comparisons or verification results, including MAPLE~\citep{jinji2026maple} and RRD-WU~\citep{shen2026rethinking}, respectively.
Except for Direct, these methods are originally proposed within larger pipelines, such as rubric generation and policy training, and we adopt only their weighting components.
We further report Uniform, which assigns equal weights to all criteria, as a reference.

\textbf{Evaluation Metric.}
For each sample, a method predicts the candidate response with the higher evaluation score in Eq.~(\ref{eq:rubric_score}) as the preferred one, where a tie earns half credit.
We report the pairwise preference accuracy (Acc), together with the relative improvement (RI) over Uniform.

\textbf{Implementation Details.}
For fair comparison, all methods share the same rubric sets and verification results, and differ only in the weights.
Unless otherwise specified, we use GPT-4.1 as the weighting model for ICW and the LLM-based baselines, and Gemini-3.1-Flash-Lite as the verifier.
Both the weighting model and the verifier are queried with temperature 0.
For ICW, we set the maximum number of revision rounds to $T=1$, and select the fixed ratio $\alpha$ and the consistency threshold $\tau$ for each benchmark.

\subsection{Overall Performance (RQ1)}
\label{subsec:main_results}

\begin{table}[t]
\centering
\begin{threeparttable}
\caption{Pairwise preference accuracy of ICW versus baselines.}
\label{tab:main_result}
\setlength\tabcolsep{4.57pt}
\renewcommand\arraystretch{1.1}
\scriptsize
\begin{tabular}{lcccccccccccc}
\toprule
 & \multicolumn{2}{c}{\textbf{RubricBench}}
 & \multicolumn{2}{c}{\textbf{ResearchQA}}
 & \multicolumn{2}{c}{\textbf{JudgeBench}}
 & \multicolumn{2}{c}{\textbf{RM-Chat}}
 & \multicolumn{2}{c}{\textbf{Chat-Hard}}
 & \multicolumn{2}{c}{\textbf{Precise-IF}} \\
\cmidrule(lr){2-3} \cmidrule(lr){4-5} \cmidrule(lr){6-7} \cmidrule(lr){8-9} \cmidrule(lr){10-11} \cmidrule(lr){12-13}
\textbf{Method}
 & Acc & RI & Acc & RI & Acc & RI & Acc & RI & Acc & RI & Acc & RI \\
\midrule
Uniform & 0.771 & -- & 0.583 & -- & 0.762 & -- & 0.740 & -- & 0.805 & -- & 0.720 & -- \\
Direct & 0.778 & 0.96\% $\uparrow$ & 0.589 & 1.03\% $\uparrow$ & 0.770 & 1.06\% $\uparrow$ & 0.731 & 1.22\% $\downarrow$ & 0.802 & 0.41\% $\downarrow$ & 0.723 & 0.43\% $\uparrow$ \\
AdaRubric & 0.774 & 0.45\% $\uparrow$ & 0.592 & 1.53\% $\uparrow$ & 0.765 & 0.31\% $\uparrow$ & 0.749 & 1.23\% $\uparrow$ & 0.812 & 0.96\% $\uparrow$ & 0.732 & 1.74\% $\uparrow$ \\
RaR & 0.782 & 1.41\% $\uparrow$ & 0.604 & 3.57\% $\uparrow$ & 0.763 & 0.10\% $\uparrow$ & 0.760 & 2.62\% $\uparrow$ & 0.816 & 1.37\% $\uparrow$ & 0.746 & 3.61\% $\uparrow$ \\
MAPLE & 0.782 & 1.41\% $\uparrow$ & 0.601 & 3.05\% $\uparrow$ & 0.760 & 0.21\% $\downarrow$ & 0.745 & 0.70\% $\uparrow$ & 0.812 & 0.96\% $\uparrow$ & 0.738 & 2.60\% $\uparrow$ \\
RRD-WU & 0.780 & 1.25\% $\uparrow$ & 0.610 & 4.58\% $\uparrow$ & 0.772 & 1.27\% $\uparrow$ & 0.760 & 2.62\% $\uparrow$ & 0.815 & 1.23\% $\uparrow$ & 0.744 & 3.33\% $\uparrow$ \\
ChasingTail & 0.782 & 1.47\% $\uparrow$ & 0.604 & 3.57\% $\uparrow$ & 0.770 & 1.06\% $\uparrow$ & 0.760 & 2.62\% $\uparrow$ & 0.814 & 1.09\% $\uparrow$ & 0.743 & 3.18\% $\uparrow$ \\
RACE-SC & 0.780 & 1.19\% $\uparrow$ & 0.601 & 3.05\% $\uparrow$ & 0.768 & 0.84\% $\uparrow$ & 0.757 & 2.27\% $\uparrow$ & 0.810 & 0.68\% $\uparrow$ & 0.756 & 5.07\% $\uparrow$ \\
\rowcolor[HTML]{ecf0ff}
\textbf{ICW} & \textbf{0.795} & \textbf{3.11\% $\uparrow$} & \textbf{0.612} & \textbf{5.08\% $\uparrow$} & \textbf{0.783} & \textbf{2.76\% $\uparrow$} & \textbf{0.769} & \textbf{3.84\% $\uparrow$} & \textbf{0.818} & \textbf{1.64\% $\uparrow$} & \textbf{0.762} & \textbf{5.79\% $\uparrow$} \\
\bottomrule
\end{tabular}
\begin{tablenotes}
    \scriptsize
    \item \textit{Note}: ``RI'' denotes the relative improvement over the Uniform baseline.
\end{tablenotes}
\end{threeparttable}
\end{table}

In this section, we compare ICW against uniform weighting and competitive weighting methods on six benchmarks.
The results are presented in Table~\ref{tab:main_result} with key observations as follows.
\ding{182} \textbf{Uniform weighting leads to inferior evaluation accuracy.}
Uniform underperforms nearly all weighting methods across benchmarks.
This indicates that treating all criteria equally produces misleading evaluation scores, since not all criteria matter equally.
\ding{183} \textbf{Direct weighting methods offer limited and inconsistent improvements.}
Direct only marginally improves over Uniform and even underperforms Uniform on RM-Chat and Chat-Hard, while stabilizing the direct weights by averaging repeated generations (RACE-SC) or assigning coarse importance levels (RaR and ChasingTail) brings larger improvements.
This indicates that the holistic judgment over all criteria is unreliable, which limits the accuracy of direct weighting.
\ding{184} \textbf{Indirect weighting methods partially address the limitation of uniform weighting.}
RRD-WU achieves the strongest baseline performance by downweighting correlated criteria.
However, it disregards the relative importance of the criteria, leading to suboptimal performance.
\ding{185} \textbf{ICW achieves state-of-the-art performance across all benchmarks.}
ICW consistently outperforms all baselines, improving the accuracy over Uniform by up to 5.79\%.
This success is attributed to pairwise importance comparison and consistency check and revision, which together yield weights that reflect the relative importance of the criteria.

\subsection{Ablation Study (RQ2)}
\label{subsec:ablation}

\begin{table}[t]
\centering
\caption{Ablation study results of the two ICW components.}
\label{tab:ablation}
\setlength{\tabcolsep}{3.39pt}
\renewcommand\arraystretch{1}
\scriptsize
\begin{threeparttable}
\begin{tabular}{lcccccccccccccc}
\toprule
\multirow{2}{*}{\textbf{Method}} & \multirow{2}{*}{\textbf{Pair.}} & \multirow{2}{*}{\textbf{Rev.}}
& \multicolumn{2}{c}{\textbf{RubricBench}}
& \multicolumn{2}{c}{\textbf{ResearchQA}}
& \multicolumn{2}{c}{\textbf{JudgeBench}}
& \multicolumn{2}{c}{\textbf{RM-Chat}}
& \multicolumn{2}{c}{\textbf{Chat-Hard}}
& \multicolumn{2}{c}{\textbf{Precise-IF}} \\
\cmidrule(lr){4-5} \cmidrule(lr){6-7} \cmidrule(lr){8-9} \cmidrule(lr){10-11} \cmidrule(lr){12-13} \cmidrule(lr){14-15}
&&& Acc & RI & Acc & RI & Acc & RI & Acc & RI & Acc & RI & Acc & RI \\
\midrule
Uniform & \XSolidBrush & \XSolidBrush & 0.771 & -- & 0.583 & -- & 0.762 & -- & 0.740 & -- & 0.805 & -- & 0.720 & -- \\
ICW$^\dagger$ & \XSolidBrush & \Checkmark & 0.782 & 1.47\% $\uparrow$ & 0.592 & 1.53\% $\uparrow$ & 0.758 & 0.52\% $\downarrow$ & 0.747 & 0.88\% $\uparrow$ & 0.803 & 0.27\% $\downarrow$ & 0.755 & 4.92\% $\uparrow$ \\
ICW$^\ddagger$ & \Checkmark & \XSolidBrush & 0.790 & 2.49\% $\uparrow$ & 0.607 & 4.07\% $\uparrow$ & \textbf{0.783} & \textbf{2.76\% $\uparrow$} & 0.758 & 2.44\% $\uparrow$ & 0.804 & 0.14\% $\downarrow$ & 0.752 & 4.49\% $\uparrow$ \\
\rowcolor[HTML]{ecf0ff}
\textbf{ICW} & \Checkmark & \Checkmark & \textbf{0.795} & \textbf{3.11\% $\uparrow$} & \textbf{0.612} & \textbf{5.08\% $\uparrow$} & \textbf{0.783} & \textbf{2.76\% $\uparrow$} & \textbf{0.769} & \textbf{3.84\% $\uparrow$} & \textbf{0.818} & \textbf{1.64\% $\uparrow$} & \textbf{0.762} & \textbf{5.79\% $\uparrow$} \\
\bottomrule
\end{tabular}
\begin{tablenotes}
\item \scriptsize \textit{Note:} ``Pair.'' denotes pairwise importance comparison, and ``Rev.'' denotes consistency check and revision.
\end{tablenotes}
\end{threeparttable}
\end{table}

In this section, we dissect the individual contributions of the two designs of ICW: pairwise importance comparison, and consistency check and revision.
The results are presented in Table~\ref{tab:ablation} with key observations as follows.
\ding{182} \textbf{Pairwise importance comparison yields the dominant performance gain.}
Specifically, in ICW$^\dagger$, we compare all criterion pairs within a single query for each display order, while retaining consistency check and revision.
We observe a consistent performance drop across all benchmarks (e.g., the accuracy decreases from 0.783 to 0.758 on JudgeBench).
This confirms that comparing all pairs at once reintroduces the holistic judgment over all criteria, which is less reliable than judging which of two criteria is more important.
\ding{183} \textbf{Consistency check and revision yields essential complementary gains.}
Specifically, in ICW$^\ddagger$, we remove consistency check and revision, and obtain the weights from the initial comparisons.
We also observe a performance drop on nearly all benchmarks (e.g., the accuracy decreases from 0.818 to 0.804 on Chat-Hard).
This is consistent with Section~\ref{subsec:consistency_revision}: revising inconsistent comparisons allows the weights to better reflect the relative importance of the criteria.
\ding{184} \textbf{The benefits from both designs can be synergistically integrated.}
This is evidenced by the complete ICW achieving the best accuracy on all benchmarks.

\subsection{Generalization Studies (RQ3)}
\label{subsec:generalization}

\begin{table}[t]
\begin{minipage}[t]{0.49\linewidth}
\centering
\caption{Results across verifiers.}
\label{tab:gen_verifier}
\belowrulesep=1pt
\aboverulesep=0.5pt
\setlength{\tabcolsep}{3.44pt}
\renewcommand{\arraystretch}{1}
\scriptsize
\begin{threeparttable}
\begin{tabular}{lcccccc}
\toprule
\multirow{2}{*}{\textbf{Method}} & \multicolumn{2}{c}{\textbf{RM-Chat}} & \multicolumn{2}{c}{\textbf{Chat-Hard}} & \multicolumn{2}{c}{\textbf{Precise-IF}} \\
\cmidrule(lr){2-3} \cmidrule(lr){4-5} \cmidrule(lr){6-7}
& Acc & RI & Acc & RI & Acc & RI \\
\midrule
\rowcolor[HTML]{f0f0f0}
\multicolumn{7}{l}{\textbf{\textit{Verifier: Gemini-3.5-Flash}}} \\
\rowcolor{white}
Uniform & 0.739 & -- & 0.777 & -- & 0.728 & -- \\
\rowcolor[HTML]{ecf0ff}
\textbf{ICW} & \textbf{0.742} & \textbf{0.35\% $\uparrow$} & \textbf{0.819} & \textbf{5.36\% $\uparrow$} & \textbf{0.767} & \textbf{5.30\% $\uparrow$} \\
\midrule
\rowcolor[HTML]{f0f0f0}
\multicolumn{7}{l}{\textbf{\textit{Verifier: GPT-4.1-mini}}} \\
\rowcolor{white}
Uniform & 0.668 & -- & 0.794 & -- & 0.647 & -- \\
\rowcolor[HTML]{ecf0ff}
\textbf{ICW} & \textbf{0.673} & \textbf{0.76\% $\uparrow$} & \textbf{0.810} & \textbf{2.07\% $\uparrow$} & \textbf{0.655} & \textbf{1.28\% $\uparrow$} \\
\midrule
\rowcolor[HTML]{f0f0f0}
\multicolumn{7}{l}{\textbf{\textit{Verifier: DeepSeek-V4-Flash}}} \\
\rowcolor{white}
Uniform & 0.703 & -- & 0.788 & -- & 0.719 & -- \\
\rowcolor[HTML]{ecf0ff}
\textbf{ICW} & \textbf{0.715} & \textbf{1.66\% $\uparrow$} & \textbf{0.812} & \textbf{3.06\% $\uparrow$} & \textbf{0.748} & \textbf{4.05\% $\uparrow$} \\
\midrule
\rowcolor[HTML]{f0f0f0}
\multicolumn{7}{l}{\textbf{\textit{Verifier: GLM-5.1}}} \\
\rowcolor{white}
Uniform & 0.712 & -- & 0.774 & -- & 0.734 & -- \\
\rowcolor[HTML]{ecf0ff}
\textbf{ICW} & \textbf{0.735} & \textbf{3.26\% $\uparrow$} & \textbf{0.820} & \textbf{5.96\% $\uparrow$} & \textbf{0.769} & \textbf{4.68\% $\uparrow$} \\
\midrule
\rowcolor[HTML]{f0f0f0}
\multicolumn{7}{l}{\textbf{\textit{Verifier: Gemini-3.1-Flash-Lite}}} \\
\rowcolor{white}
Uniform & 0.740 & -- & 0.805 & -- & 0.720 & -- \\
\rowcolor[HTML]{ecf0ff}
\textbf{ICW} & \textbf{0.769} & \textbf{3.84\% $\uparrow$} & \textbf{0.818} & \textbf{1.64\% $\uparrow$} & \textbf{0.762} & \textbf{5.79\% $\uparrow$} \\
\bottomrule
\end{tabular}
\begin{tablenotes}
\item \tiny \textit{Note}: ``RI'' denotes relative improvement over the Uniform baseline.
\end{tablenotes}
\end{threeparttable}
\end{minipage}
\hfill
\begin{minipage}[t]{0.49\linewidth}
\centering
\caption{Results across weighting models.}
\label{tab:gen_weighting_model}
\belowrulesep=1pt
\aboverulesep=0.5pt
\setlength{\tabcolsep}{3.44pt}
\renewcommand{\arraystretch}{1}
\scriptsize
\begin{threeparttable}
\begin{tabular}{lcccccc}
\toprule
\multirow{2}{*}{\textbf{Method}} & \multicolumn{2}{c}{\textbf{RM-Chat}} & \multicolumn{2}{c}{\textbf{Chat-Hard}} & \multicolumn{2}{c}{\textbf{Precise-IF}} \\
\cmidrule(lr){2-3} \cmidrule(lr){4-5} \cmidrule(lr){6-7}
& Acc & RI & Acc & RI & Acc & RI \\
\midrule
\rowcolor[HTML]{f0f0f0}
\multicolumn{7}{l}{\textbf{\textit{Weighting Model: Gemini-3.5-Flash}}} \\
\rowcolor{white}
Uniform & 0.740 & -- & 0.805 & -- & 0.720 & -- \\
\rowcolor[HTML]{ecf0ff}
\textbf{ICW} & \textbf{0.758} & \textbf{2.44\% $\uparrow$} & \textbf{0.823} & \textbf{2.32\% $\uparrow$} & \textbf{0.770} & \textbf{6.95\% $\uparrow$} \\
\midrule
\rowcolor[HTML]{f0f0f0}
\multicolumn{7}{l}{\textbf{\textit{Weighting Model: GPT-4.1-mini}}} \\
\rowcolor{white}
Uniform & 0.740 & -- & 0.805 & -- & 0.720 & -- \\
\rowcolor[HTML]{ecf0ff}
\textbf{ICW} & \textbf{0.758} & \textbf{2.44\% $\uparrow$} & \textbf{0.820} & \textbf{1.91\% $\uparrow$} & \textbf{0.750} & \textbf{4.20\% $\uparrow$} \\
\midrule
\rowcolor[HTML]{f0f0f0}
\multicolumn{7}{l}{\textbf{\textit{Weighting Model: DeepSeek-V4-Flash}}} \\
\rowcolor{white}
Uniform & 0.740 & -- & 0.805 & -- & 0.720 & -- \\
\rowcolor[HTML]{ecf0ff}
\textbf{ICW} & \textbf{0.760} & \textbf{2.62\% $\uparrow$} & \textbf{0.818} & \textbf{1.64\% $\uparrow$} & \textbf{0.767} & \textbf{6.52\% $\uparrow$} \\
\midrule
\rowcolor[HTML]{f0f0f0}
\multicolumn{7}{l}{\textbf{\textit{Weighting Model: GLM-5.1}}} \\
\rowcolor{white}
Uniform & 0.740 & -- & 0.805 & -- & 0.720 & -- \\
\rowcolor[HTML]{ecf0ff}
\textbf{ICW} & \textbf{0.756} & \textbf{2.09\% $\uparrow$} & \textbf{0.822} & \textbf{2.19\% $\uparrow$} & \textbf{0.768} & \textbf{6.65\% $\uparrow$} \\
\midrule
\rowcolor[HTML]{f0f0f0}
\multicolumn{7}{l}{\textbf{\textit{Weighting Model: GPT-4.1}}} \\
\rowcolor{white}
Uniform & 0.740 & -- & 0.805 & -- & 0.720 & -- \\
\rowcolor[HTML]{ecf0ff}
\textbf{ICW} & \textbf{0.769} & \textbf{3.84\% $\uparrow$} & \textbf{0.818} & \textbf{1.64\% $\uparrow$} & \textbf{0.762} & \textbf{5.79\% $\uparrow$} \\
\bottomrule
\end{tabular}
\begin{tablenotes}
\item \tiny \textit{Note}: ``RI'' denotes relative improvement over the Uniform baseline.
\end{tablenotes}
\end{threeparttable}
\end{minipage}
\end{table}

In this section, we evaluate the generalizability of ICW across verifiers and weighting models from different model families, including Gemini, GPT, DeepSeek, and GLM.
The results are presented in Tables~\ref{tab:gen_verifier} and~\ref{tab:gen_weighting_model}, and the results on the other three benchmarks are provided in Appendix~\ref{app:generalization}, with key observations as follows.
\ding{182} \textbf{ICW generalizes across verifiers.}
With GPT-4.1 as the weighting model, ICW generally outperforms Uniform across the five verifiers.
This indicates that the weights of ICW complement the verification results rather than depend on a particular verifier.
\ding{183} \textbf{ICW generalizes across weighting models.}
With Gemini-3.1-Flash-Lite as the verifier, ICW consistently outperforms Uniform with all five weighting models, including the smaller GPT-4.1-mini.
This shows that pairwise importance comparison does not rely on a particular weighting model.
Collectively, these findings establish ICW as a model-agnostic rubric weighting method.

\subsection{Hyperparameter Sensitivity Study (RQ4)}
\label{subsec:hyperparameter}

\begin{wrapfigure}[9]{r}{0.5\linewidth}
\vspace{-2em}
\centering
\includegraphics[width=\linewidth]{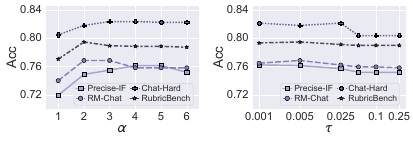}
\vspace{-1em}
\caption{Sensitivity to the fixed ratio $\alpha$ and the consistency threshold $\tau$ with GPT-4.1 as the weighting model.}
\label{fig:sensitivity}
\vspace{-5pt}
\end{wrapfigure}

In this section, we examine the impact of the fixed ratio $\alpha$ and the consistency threshold $\tau$ on ICW.
The results are presented in Figure~\ref{fig:sensitivity}, and the results with other weighting models are provided in Appendix~\ref{app:sensitivity}, with key observations as follows.
\ding{182} \textbf{ICW is robust to the fixed ratio.}
ICW substantially improves over uniform weighting ($\alpha=1$) once $\alpha\geq2$, and the accuracy remains stable as $\alpha$ further increases.
Notably, with any tested $\alpha\geq2$, ICW outperforms the strongest baseline RRD-WU in the average accuracy over the four benchmarks in Figure~\ref{fig:sensitivity}.
This indicates that the advantage of ICW stems from pairwise importance comparison rather than a delicate choice of $\alpha$.
\ding{183} \textbf{Smaller consistency thresholds improve accuracy by triggering more revisions.}
As $\tau$ decreases, more comparison matrices are revised and the accuracy generally improves, whereas ICW approaches its variant without revision for large $\tau$.
This is consistent with the ablation results in Section~\ref{subsec:ablation}.

\subsection{Downstream RL Performance (RQ5)}
\label{subsec:downstream_rl}

\begin{table}[t]
\centering
\caption{Downstream RL results.}
\label{tab:downstream_rl}
\setlength{\tabcolsep}{6.61pt}
\renewcommand{\arraystretch}{1}
\scriptsize
\begin{threeparttable}
\begin{tabular}{lcccccccccc}
\toprule
\multirow{2.5}{*}{\textbf{Method}} & \multicolumn{2}{c}{\textbf{BiGGen Bench}} & \multicolumn{4}{c}{\textbf{AlpacaEval 2.0}} & \multicolumn{4}{c}{\textbf{Arena-Hard}} \\
\cmidrule(lr){2-3} \cmidrule(lr){4-7} \cmidrule(lr){8-11}
& Score & RI & WR & RI & LC & RI & WR & RI & SC & RI \\
\midrule
\rowcolor[HTML]{f0f0f0}
\multicolumn{11}{l}{\textit{\textbf{Policy Model: Qwen2.5-3B-Instruct}}} \\
\rowcolor{white}
Base Model & 59.22 & -- & 17.25 & -- & 16.71 & -- & 25.93 & -- & 27.63 & -- \\
Uniform & 61.45 & 3.77\% $\uparrow$ & 21.59 & 25.16\% $\uparrow$ & 19.07 & 14.12\% $\uparrow$ & 27.58 & 6.36\% $\uparrow$ & 28.58 & 3.44\% $\uparrow$ \\
RACE-SC & \textbf{64.95} & \textbf{9.68\% $\uparrow$} & 24.32 & 40.99\% $\uparrow$ & 22.74 & 36.09\% $\uparrow$ & 28.67 & 10.57\% $\uparrow$ & 29.61 & 7.17\% $\uparrow$ \\
RRD-WU & 64.23 & 8.46\% $\uparrow$ & 24.37 & 41.28\% $\uparrow$ & 20.42 & 22.20\% $\uparrow$ & 27.78 & 7.13\% $\uparrow$ & 29.80 & 7.85\% $\uparrow$ \\
\rowcolor[HTML]{ecf0ff}
\textbf{ICW} & 64.78 & 9.39\% $\uparrow$ & \textbf{26.23} & \textbf{52.06\% $\uparrow$} & \textbf{23.54} & \textbf{40.87\% $\uparrow$} & \textbf{31.65} & \textbf{22.06\% $\uparrow$} & \textbf{31.51} & \textbf{14.04\% $\uparrow$} \\
\midrule
\rowcolor[HTML]{f0f0f0}
\multicolumn{11}{l}{\textit{\textbf{Policy Model: Qwen2.5-7B-Instruct}}} \\
\rowcolor{white}
Base Model & 68.98 & -- & 27.79 & -- & 30.08 & -- & 51.90 & -- & 53.41 & -- \\
Uniform & 71.83 & 4.13\% $\uparrow$ & 40.27 & 44.91\% $\uparrow$ & 33.83 & 12.47\% $\uparrow$ & 56.20 & 8.29\% $\uparrow$ & 56.88 & 6.50\% $\uparrow$ \\
RACE-SC & 72.12 & 4.55\% $\uparrow$ & 40.42 & 45.45\% $\uparrow$ & 36.19 & 20.31\% $\uparrow$ & 57.08 & 9.98\% $\uparrow$ & 57.68 & 7.99\% $\uparrow$ \\
RRD-WU & 72.48 & 5.07\% $\uparrow$ & 41.54 & 49.48\% $\uparrow$ & 36.48 & 21.28\% $\uparrow$ & 55.49 & 6.92\% $\uparrow$ & 55.99 & 4.83\% $\uparrow$ \\
\rowcolor[HTML]{ecf0ff}
\textbf{ICW} & \textbf{73.84} & \textbf{7.05\% $\uparrow$} & \textbf{42.83} & \textbf{54.12\% $\uparrow$} & \textbf{37.66} & \textbf{25.20\% $\uparrow$} & \textbf{59.78} & \textbf{15.18\% $\uparrow$} & \textbf{58.05} & \textbf{8.69\% $\uparrow$} \\
\midrule
\rowcolor[HTML]{f0f0f0}
\multicolumn{11}{l}{\textit{\textbf{Policy Model: Qwen3-4B}}} \\
\rowcolor{white}
Base Model & 76.79 & -- & 51.87 & -- & 42.14 & -- & 73.52 & -- & 72.27 & -- \\
Uniform & 77.73 & 1.22\% $\uparrow$ & 54.37 & 4.82\% $\uparrow$ & 43.70 & 3.70\% $\uparrow$ & 74.46 & 1.28\% $\uparrow$ & 74.21 & 2.68\% $\uparrow$ \\
RACE-SC & 79.28 & 3.24\% $\uparrow$ & 56.43 & 8.79\% $\uparrow$ & 45.14 & 7.12\% $\uparrow$ & 74.28 & 1.03\% $\uparrow$ & 73.32 & 1.45\% $\uparrow$ \\
RRD-WU & 79.17 & 3.10\% $\uparrow$ & 56.87 & 9.64\% $\uparrow$ & 45.81 & 8.71\% $\uparrow$ & 75.64 & 2.88\% $\uparrow$ & 74.44 & 3.00\% $\uparrow$ \\
\rowcolor[HTML]{ecf0ff}
\textbf{ICW} & \textbf{79.80} & \textbf{3.92\% $\uparrow$} & \textbf{56.98} & \textbf{9.85\% $\uparrow$} & \textbf{46.39} & \textbf{10.09\% $\uparrow$} & \textbf{76.46} & \textbf{4.00\% $\uparrow$} & \textbf{75.21} & \textbf{4.07\% $\uparrow$} \\
\bottomrule
\end{tabular}
\begin{tablenotes}
\item \scriptsize \textit{Note}: ``WR'', ``LC'', and ``SC'' denote the win rate, the length-controlled win rate, and the style-controlled win rate, respectively, and ``RI'' denotes relative improvement over Base Model.
\end{tablenotes}
\end{threeparttable}
\end{table}

In this section, we investigate whether the improved evaluation scores of ICW, which result from accounting for the unequal importance of criteria, translate into better policy models in downstream RL.
We select representative baselines, use the evaluation scores of each method as rewards on user prompts from WildChat~\citep{zhao2024wildchat}, and fine-tune three policy models from the Qwen2.5~\citep{qwen2024qwen25} and Qwen3~\citep{yang2025qwen3} families via GRPO~\citep{shao2024deepseekmath}.
We further report the policy models before RL, referred to as Base Model, as a reference.
We evaluate the policy models on BiGGen Bench~\citep{kim2025biggen}, AlpacaEval 2.0~\citep{dubois2024length}, and Arena-Hard~\citep{li2025arenahard}, and more details are provided in Appendix~\ref{app:implementation}.
The results are presented in Table~\ref{tab:downstream_rl} with key observations as follows.
\ding{182} \textbf{Uniform weighting also limits downstream RL performance.}
RACE-SC and RRD-WU, which achieve higher accuracy than Uniform in Table~\ref{tab:main_result}, generally outperform Uniform.
This indicates that treating all criteria equally produces misleading rewards, which in turn limit policy optimization.
\ding{183} \textbf{ICW achieves the best overall downstream RL performance.}
ICW achieves the best results on nearly all benchmarks across policy models; for example, it improves the AlpacaEval LC win rate of Qwen2.5-3B-Instruct by 40.87\% over Base Model, compared with 14.12\% for Uniform.
These results demonstrate that weights reflecting the relative importance of the criteria provide more effective rewards.

\section{Conclusion}
\label{sec:conclusion}

In this paper, we investigated the rubric weighting problem in rubric-based evaluation, where not all criteria matter equally but existing methods commonly aggregate the verification results with uniform weighting.
To address this limitation, we proposed ICW, which derives the weights of criteria from pairwise importance comparisons made by an LLM, and checks and revises inconsistent comparisons.
Extensive experiments confirm that ICW consistently outperforms uniform weighting and competitive weighting methods, improving the pairwise preference accuracy over uniform weighting by up to 5.79\%, and translates into better policy models in downstream RL---including a 40.87\% improvement in the AlpacaEval LC win rate of Qwen2.5-3B-Instruct over Base Model, compared with 14.12\% for uniform weighting.

\textbf{Limitations \& Future Work.}
Two limitations of this work warrant future investigation: (1) ICW compares every pair of criteria, so the number of queries grows quadratically with the number of criteria (Appendix~\ref{app:cost}), motivating future work on comparing only informative pairs. (2) In specialized domains such as healthcare, the weighting model may lack the knowledge required to judge the importance of criteria. Incorporating expert-specified comparisons remains a valuable direction.

\bibliography{iclr2027_conference,arxiv_references}
\bibliographystyle{iclr2027_conference}

\newpage
\appendix

\section{Related Work}
\label{app:related_work}

\textbf{Rubric-Based Evaluation.}
Rubric-based evaluation is increasingly adopted to evaluate open-ended LLM responses, where response quality is decomposed into explicit criteria verified by LLMs~\citep{hashemi2024llmrubric,li2025generation}.
Existing works mainly advance the construction, coverage, and downstream use of rubrics.
For construction, rubrics are written by experts~\citep{arora2025healthbench,zhang2026rubricbench}, mined from survey articles~\citep{yifei2026researchqa}, or generated by LLMs for each user prompt~\citep{cook2024tick,li2026rubrichub,xu2026rubricarm}.
For coverage, rubrics are extended with criteria elicited from pairwise comparisons of responses~\citep{rezaei2025onlinerubrics} or generated to comprehensively capture the evaluation requirements of a user prompt~\citep{chen2026genrubric}.
Recent work further evolves rubric sets: Rubrics on Trial~\citep{yang2026rubricsontrial} uses synthetic response-pair comparisons to assess criterion utility and necessity, while RubricArmor~\citep{yang2026rubricarmor} alternates adversarial response generation with rubric repair to address omitted or underspecified criteria.
For downstream use, evaluation scores are used to rank candidate responses~\citep{li2026rubrichub} and serve as rewards for RL beyond verifiable domains~\citep{mu2024rulebased,gunjal2026rubrics,viswanathan2025checklists,zhang2026chasing,shan2026rubricrlsurvey}.
Aggregation schemes include both uniform weighting~\citep{qin2024infobench,cook2024tick,lee2025checkeval} and explicit importance weights~\citep{arora2025healthbench,viswanathan2025checklists,li2026rubrichub}; we focus on how these weights are obtained.

\textbf{Rubric Weighting.}
Beyond uniform weighting, some approaches rely on experts to specify criterion weights~\citep{arora2025healthbench}, which is costly and difficult to scale.
Existing automatic methods can be divided into direct and indirect weighting methods.
Direct weighting methods prompt LLMs to assign numerical weights or importance levels to individual criteria or evaluation dimensions~\citep{du2025deepresearch,viswanathan2025checklists,gunjal2026rubrics,zhang2026chasing,ding2026adarubric,li2026rubrichub}.
Some of them stabilize the weights by aggregating repeated generations~\citep{du2025deepresearch} or assigning coarse importance levels~\citep{gunjal2026rubrics,zhang2026chasing}, while still eliciting importance judgments directly. Findings of anchoring effects and inconsistent ratings in text evaluation~\citep{stureborg2024large} motivate examining alternative elicitation schemes.
Indirect weighting methods derive the weights from pairwise comparisons or verification results.
MAPLE~\citep{jinji2026maple} compares criteria pairwise with multiple LLMs, but aggregates the comparisons with a Bradley--Terry model~\citep{bradley1952rank} without checking their consistency.
RRD~\citep{shen2026rethinking} downweights correlated criteria based on the verification results of sampled responses, which reduces repeated credit for overlapping criteria but disregards the importance of the criteria.
In contrast, ICW derives the weights from pairwise importance comparisons through the principal eigenvector, and checks and revises the comparisons based on their consistency.

\section{Theoretical Justification}
\label{app:proofs}

We provide the proofs of Propositions~\ref{prop:eigenvector} and~\ref{prop:consistency} and the derivation of Eq.~(\ref{eq:consistency_index}) following~\citet{saaty1977scaling}, and further show in Proposition~\ref{prop:order} that the principal eigenvector preserves the order of transitive judgments.

\subsection{Proof of Proposition~\ref{prop:eigenvector}}

\begin{proof}
If there exist weights $\boldsymbol{w}$ with $w_k>0$ such that $a_{ij}=w_i/w_j$ for all $i,j$, then $a_{ij}a_{j\ell}=(w_i/w_j)(w_j/w_\ell)=w_i/w_\ell=a_{i\ell}$ for all $i,j,\ell$, so $\mathbf{A}$ is consistent.
Conversely, let $\mathbf{A}$ be consistent, and define $w_i=a_{i1}/\sum_{k=1}^{K}a_{k1}$ for $i=1,\ldots,K$, so that $w_i>0$ and $\sum_{i=1}^{K}w_i=1$.
For any $i$ and $j$, consistency and reciprocity give $a_{ij}=a_{i1}a_{1j}=a_{i1}/a_{j1}=w_i/w_j$.
If another normalized vector $\boldsymbol{w}'$ also satisfies $a_{ij}=w'_i/w'_j$ for all $i,j$, then $w'_i/w'_1=a_{i1}=w_i/w_1$ for all $i$, so $\boldsymbol{w}'$ is proportional to $\boldsymbol{w}$, and the normalization gives $\boldsymbol{w}'=\boldsymbol{w}$.
Moreover, for each $i$,
\begin{equation*}
    (\mathbf{A}\boldsymbol{w})_i=\sum_{j=1}^{K}a_{ij}w_j=\sum_{j=1}^{K}\frac{w_i}{w_j}w_j=Kw_i,
\end{equation*}
so $\mathbf{A}\boldsymbol{w}=K\boldsymbol{w}$.
Since $\mathbf{A}$ is positive and $\boldsymbol{w}$ is a positive eigenvector, the Perron--Frobenius theorem~\citep{horn2012matrix} implies that its eigenvalue is the largest eigenvalue of $\mathbf{A}$, i.e., $\lambda_{\max}=K$, and that $\boldsymbol{w}$ is the principal eigenvector.
\end{proof}

\subsection{Proof of Proposition~\ref{prop:consistency}}

\begin{proof}
By the Perron--Frobenius theorem~\citep{horn2012matrix}, the principal eigenvector $\boldsymbol{w}$ of $\mathbf{A}$ is positive.
Define $\epsilon_{ij}=a_{ij}w_j/w_i>0$ for all $i,j$, so that reciprocity gives $\epsilon_{ii}=1$ and $\epsilon_{ji}=1/\epsilon_{ij}$.
The $i$-th row of $\mathbf{A}\boldsymbol{w}=\lambda_{\max}\boldsymbol{w}$ gives $\lambda_{\max}=\sum_{j=1}^{K}\epsilon_{ij}$ for each $i$.
Summing over $i$ yields
\begin{equation*}
    K\lambda_{\max}=\sum_{i=1}^{K}\sum_{j=1}^{K}\epsilon_{ij}=K+\sum_{1\le i<j\le K}\left(\epsilon_{ij}+\frac{1}{\epsilon_{ij}}\right)\geq K+K(K-1)=K^2,
\end{equation*}
where the inequality follows from $t+1/t\geq2$ for $t>0$, with equality if and only if $t=1$.
Hence, $\lambda_{\max}\geq K$, with equality if and only if $\epsilon_{ij}=1$, i.e., $a_{ij}=w_i/w_j$, for all $i,j$.
If the equality holds, $\mathbf{A}$ is consistent by Proposition~\ref{prop:eigenvector}; conversely, if $\mathbf{A}$ is consistent, Proposition~\ref{prop:eigenvector} gives $\lambda_{\max}=K$.
Moreover, since there are $K(K-1)/2$ pairs with $i<j$, subtracting $K^2$ from both sides of $K\lambda_{\max}=K+\sum_{i<j}(\epsilon_{ij}+1/\epsilon_{ij})$ and dividing by $K(K-1)$ yields Eq.~(\ref{eq:consistency_index}).
\end{proof}

\subsection{Order Preservation of the Principal Eigenvector}
\label{app:order}

We call the judgments transitive if the criteria can be assigned importance levels $r_1,\ldots,r_K$ such that $a_{ij}=\alpha$ if $r_i>r_j$, $a_{ij}=1$ if $r_i=r_j$, and $a_{ij}=1/\alpha$ if $r_i<r_j$, i.e., the comparisons contain no contradiction such as a cycle.

\begin{proposition}
\label{prop:order}
Let $\mathbf{A}$ be the comparison matrix in Eq.~(\ref{eq:comparison}) with $\alpha>1$, and suppose that the judgments are transitive with importance levels $r_1,\ldots,r_K$.
Then the principal eigenvector $\boldsymbol{w}$ of $\mathbf{A}$ satisfies $w_i>w_j$ if $r_i>r_j$, and $w_i=w_j$ if $r_i=r_j$.
\end{proposition}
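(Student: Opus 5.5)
We compare the $i$-th and $j$-th rows of the eigenvalue equation $\mathbf{A}\boldsymbol{w}=\lambda_{\max}\boldsymbol{w}$ and show that the row of the more important criterion dominates the other entry by entry. The Perron--Frobenius theorem~\citep{horn2012matrix} guarantees that $\boldsymbol{w}$ is positive and unique up to scaling, with $\lambda_{\max}>0$. Hence
\begin{equation*}
    \lambda_{\max}(w_i-w_j)=\sum_{\ell=1}^{K}(a_{i\ell}-a_{j\ell})\,w_\ell ,
\end{equation*}
and it suffices to show that the right-hand side is positive when $r_i>r_j$ and zero when $r_i=r_j$.

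\textbf{Equal levels.} Suppose $r_i=r_j$. For every $\ell$, the entry $a_{i\ell}$ depends only on the sign of $r_i-r_\ell$, and that sign is the same as the sign of $r_j-r_\ell$. Thus rows $i$ and $j$ of $\mathbf{A}$ coincide. Note that $a_{ii}=1=a_{ji}$ and $a_{ij}=1=a_{jj}$, so this also holds in columns $i$ and $j$. The right-hand side is therefore zero, which gives $w_i=w_j$ because $\lambda_{\max}>0$. An alternative argument uses uniqueness: swapping indices $i$ and $j$ maps $\mathbf{A}$ to itself, so the unique normalized principal eigenvector must be invariant under the swap.

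\textbf{Strict levels.} Suppose $r_i>r_j$. We check that $a_{i\ell}\ge a_{j\ell}$ for every $\ell$, i.e.\ that the map $r\mapsto \phi(r-r_\ell)$ is nondecreasing. Here $\phi(t)=\alpha,1,1/\alpha$ according as $t>0$, $t=0$, or $t<0$, and $\phi$ is nondecreasing because $\alpha>1$. At $\ell=j$ we have $a_{ij}=\alpha$ and $a_{jj}=1$, so the inequality is strict there; one can also use $\ell=i$, where $a_{ii}=1>1/\alpha=a_{ji}$. Since every $w_\ell$ is positive, the sum is then strictly positive, and so $w_i>w_j$.

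I expect no real obstacle in this argument. The step that needs care is the entrywise monotonicity when $\ell\in\{i,j\}$, because the diagonal entries $a_{ii}=a_{jj}=1$ have to be handled by the same case distinction as the off-diagonal entries. I would write these cases out explicitly. The other point to state is that the positivity of $\boldsymbol{w}$ and of $\lambda_{\max}$ comes from Perron--Frobenius, which applies because $\mathbf{A}$ is a positive matrix.
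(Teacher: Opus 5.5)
Your proof is correct and takes essentially the same approach as the paper. Both write $a_{i\ell}$ as a nondecreasing step function of $r_i-r_\ell$, derive entrywise row domination with strict inequality at $\ell=j$, and compare rows $i$ and $j$ of $\mathbf{A}\boldsymbol{w}=\lambda_{\max}\boldsymbol{w}$ using the positivity of $\boldsymbol{w}$ and $\lambda_{\max}$ from Perron--Frobenius, with the equal-level case handled by identical rows (your permutation-invariance alternative is also valid).
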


\begin{proof}
Let $s(t)=\alpha$ if $t>0$, $s(t)=1$ if $t=0$, and $s(t)=1/\alpha$ if $t<0$, so that $a_{ik}=s(r_i-r_k)$ for all $i,k$, and $s$ is nondecreasing.
If $r_i>r_j$, then $r_i-r_k>r_j-r_k$ gives $a_{ik}\geq a_{jk}$ for every $k$, with strict inequality for $k=j$ since $a_{ij}=\alpha>1=a_{jj}$.
Since $\boldsymbol{w}$ is positive by the Perron--Frobenius theorem~\citep{horn2012matrix}, the $i$-th and $j$-th rows of $\mathbf{A}\boldsymbol{w}=\lambda_{\max}\boldsymbol{w}$ give
\begin{equation*}
    \lambda_{\max}w_i=\sum_{k=1}^{K}a_{ik}w_k>\sum_{k=1}^{K}a_{jk}w_k=\lambda_{\max}w_j,
\end{equation*}
so $w_i>w_j$ as $\lambda_{\max}>0$.
If $r_i=r_j$, then $a_{ik}=a_{jk}$ for every $k$, and the same two rows give $w_i=w_j$.
\end{proof}

Proposition~\ref{prop:order} holds for any $\alpha>1$, whereas $\mathbf{A}$ is inconsistent whenever the importance levels take more than two distinct values, e.g., $a_{ij}a_{j\ell}=\alpha^2\neq\alpha=a_{i\ell}$ for $r_i>r_j>r_\ell$.
Hence, the inconsistency caused by encoding transitive judgments with the fixed ratio does not change the order of the weights.

\section{More Experimental Results}
\label{app:more_results}

In this section, we present supplementary empirical results deferred from the main text. First, we extend the generalization study of Section~\ref{subsec:generalization} to RubricBench, ResearchQA, and JudgeBench (Section~\ref{app:generalization}). Second, we extend the sensitivity study of Section~\ref{subsec:hyperparameter} to other weighting models (Section~\ref{app:sensitivity}). Third, we report the token cost of ICW (Section~\ref{app:cost}).

\subsection{More Generalization Results}
\label{app:generalization}

In this subsection, we extend the generalization study of Section~\ref{subsec:generalization} to RubricBench, ResearchQA, and JudgeBench.
The results are presented in Tables~\ref{tab:gen_verifier_full} and~\ref{tab:gen_weighting_model_full}, and in line with the findings in Section~\ref{subsec:generalization}, we make two observations.
\ding{182} \textbf{ICW generalizes across verifiers.}
With GPT-4.1 as the weighting model, ICW generally outperforms Uniform across the five verifiers on the three benchmarks.
This further confirms that the weights of ICW complement the verification results of different verifiers.
\ding{183} \textbf{ICW generalizes across weighting models.}
With Gemini-3.1-Flash-Lite as the verifier, ICW consistently outperforms Uniform with all five weighting models on the three benchmarks.
This further confirms that pairwise importance comparison does not rely on a particular weighting model.

\begin{table}[ht]
\begin{minipage}[t]{0.49\linewidth}
\centering
\caption{Results across verifiers on the other benchmarks.}
\label{tab:gen_verifier_full}
\belowrulesep=1pt
\aboverulesep=0.5pt
\setlength{\tabcolsep}{3.44pt}
\renewcommand{\arraystretch}{1}
\scriptsize
\begin{threeparttable}
\begin{tabular}{lcccccc}
\toprule
\multirow{2}{*}{\textbf{Method}} & \multicolumn{2}{c}{\textbf{RubricBench}} & \multicolumn{2}{c}{\textbf{ResearchQA}} & \multicolumn{2}{c}{\textbf{JudgeBench}} \\
\cmidrule(lr){2-3} \cmidrule(lr){4-5} \cmidrule(lr){6-7}
& Acc & RI & Acc & RI & Acc & RI \\
\midrule
\rowcolor[HTML]{f0f0f0}
\multicolumn{7}{l}{\textbf{\textit{Verifier: Gemini-3.5-Flash}}} \\
\rowcolor{white}
Uniform & 0.795 & -- & 0.601 & -- & 0.802 & -- \\
\rowcolor[HTML]{ecf0ff}
\textbf{ICW} & \textbf{0.804} & \textbf{1.16\% $\uparrow$} & \textbf{0.612} & \textbf{1.96\% $\uparrow$} & \textbf{0.830} & \textbf{3.52\% $\uparrow$} \\
\midrule
\rowcolor[HTML]{f0f0f0}
\multicolumn{7}{l}{\textbf{\textit{Verifier: GPT-4.1-mini}}} \\
\rowcolor{white}
Uniform & 0.760 & -- & \textbf{0.583} & -- & 0.589 & -- \\
\rowcolor[HTML]{ecf0ff}
\textbf{ICW} & \textbf{0.773} & \textbf{1.72\% $\uparrow$} & 0.580 & 0.50\% $\downarrow$ & \textbf{0.607} & \textbf{3.02\% $\uparrow$} \\
\midrule
\rowcolor[HTML]{f0f0f0}
\multicolumn{7}{l}{\textbf{\textit{Verifier: DeepSeek-V4-Flash}}} \\
\rowcolor{white}
Uniform & 0.788 & -- & 0.592 & -- & 0.704 & -- \\
\rowcolor[HTML]{ecf0ff}
\textbf{ICW} & \textbf{0.792} & \textbf{0.56\% $\uparrow$} & \textbf{0.604} & \textbf{2.01\% $\uparrow$} & \textbf{0.715} & \textbf{1.49\% $\uparrow$} \\
\midrule
\rowcolor[HTML]{f0f0f0}
\multicolumn{7}{l}{\textbf{\textit{Verifier: GLM-5.1}}} \\
\rowcolor{white}
Uniform & 0.791 & -- & 0.604 & -- & 0.759 & -- \\
\rowcolor[HTML]{ecf0ff}
\textbf{ICW} & \textbf{0.796} & \textbf{0.66\% $\uparrow$} & \textbf{0.618} & \textbf{2.44\% $\uparrow$} & \textbf{0.775} & \textbf{2.12\% $\uparrow$} \\
\midrule
\rowcolor[HTML]{f0f0f0}
\multicolumn{7}{l}{\textbf{\textit{Verifier: Gemini-3.1-Flash-Lite}}} \\
\rowcolor{white}
Uniform & 0.771 & -- & 0.583 & -- & 0.762 & -- \\
\rowcolor[HTML]{ecf0ff}
\textbf{ICW} & \textbf{0.795} & \textbf{3.11\% $\uparrow$} & \textbf{0.612} & \textbf{5.08\% $\uparrow$} & \textbf{0.783} & \textbf{2.76\% $\uparrow$} \\
\bottomrule
\end{tabular}
\begin{tablenotes}
\item \tiny \textit{Note}: ``RI'' denotes relative improvement over the Uniform baseline.
\end{tablenotes}
\end{threeparttable}
\end{minipage}
\hfill
\begin{minipage}[t]{0.49\linewidth}
\centering
\caption{Results across weighting models on the other benchmarks.}
\label{tab:gen_weighting_model_full}
\belowrulesep=1pt
\aboverulesep=0.5pt
\setlength{\tabcolsep}{3.44pt}
\renewcommand{\arraystretch}{1}
\scriptsize
\begin{threeparttable}
\begin{tabular}{lcccccc}
\toprule
\multirow{2}{*}{\textbf{Method}} & \multicolumn{2}{c}{\textbf{RubricBench}} & \multicolumn{2}{c}{\textbf{ResearchQA}} & \multicolumn{2}{c}{\textbf{JudgeBench}} \\
\cmidrule(lr){2-3} \cmidrule(lr){4-5} \cmidrule(lr){6-7}
& Acc & RI & Acc & RI & Acc & RI \\
\midrule
\rowcolor[HTML]{f0f0f0}
\multicolumn{7}{l}{\textbf{\textit{Weighting Model: Gemini-3.5-Flash}}} \\
\rowcolor{white}
Uniform & 0.771 & -- & 0.583 & -- & 0.762 & -- \\
\rowcolor[HTML]{ecf0ff}
\textbf{ICW} & \textbf{0.792} & \textbf{2.83\% $\uparrow$} & \textbf{0.615} & \textbf{5.58\% $\uparrow$} & \textbf{0.782} & \textbf{2.65\% $\uparrow$} \\
\midrule
\rowcolor[HTML]{f0f0f0}
\multicolumn{7}{l}{\textbf{\textit{Weighting Model: GPT-4.1-mini}}} \\
\rowcolor{white}
Uniform & 0.771 & -- & 0.583 & -- & 0.762 & -- \\
\rowcolor[HTML]{ecf0ff}
\textbf{ICW} & \textbf{0.789} & \textbf{2.44\% $\uparrow$} & \textbf{0.607} & \textbf{4.07\% $\uparrow$} & \textbf{0.783} & \textbf{2.76\% $\uparrow$} \\
\midrule
\rowcolor[HTML]{f0f0f0}
\multicolumn{7}{l}{\textbf{\textit{Weighting Model: DeepSeek-V4-Flash}}} \\
\rowcolor{white}
Uniform & 0.771 & -- & 0.583 & -- & 0.762 & -- \\
\rowcolor[HTML]{ecf0ff}
\textbf{ICW} & \textbf{0.795} & \textbf{3.11\% $\uparrow$} & \textbf{0.601} & \textbf{3.05\% $\uparrow$} & \textbf{0.777} & \textbf{2.01\% $\uparrow$} \\
\midrule
\rowcolor[HTML]{f0f0f0}
\multicolumn{7}{l}{\textbf{\textit{Weighting Model: GLM-5.1}}} \\
\rowcolor{white}
Uniform & 0.771 & -- & 0.583 & -- & 0.762 & -- \\
\rowcolor[HTML]{ecf0ff}
\textbf{ICW} & \textbf{0.797} & \textbf{3.40\% $\uparrow$} & \textbf{0.607} & \textbf{4.07\% $\uparrow$} & \textbf{0.782} & \textbf{2.65\% $\uparrow$} \\
\midrule
\rowcolor[HTML]{f0f0f0}
\multicolumn{7}{l}{\textbf{\textit{Weighting Model: GPT-4.1}}} \\
\rowcolor{white}
Uniform & 0.771 & -- & 0.583 & -- & 0.762 & -- \\
\rowcolor[HTML]{ecf0ff}
\textbf{ICW} & \textbf{0.795} & \textbf{3.11\% $\uparrow$} & \textbf{0.612} & \textbf{5.08\% $\uparrow$} & \textbf{0.783} & \textbf{2.76\% $\uparrow$} \\
\bottomrule
\end{tabular}
\begin{tablenotes}
\item \tiny \textit{Note}: ``RI'' denotes relative improvement over the Uniform baseline.
\end{tablenotes}
\end{threeparttable}
\end{minipage}
\end{table}

\needspace{14\baselineskip}
\subsection{More Sensitivity Results}
\label{app:sensitivity}

\begin{wrapfigure}{r}{0.5\linewidth}
\vspace{-2em}
\centering
\includegraphics[width=\linewidth]{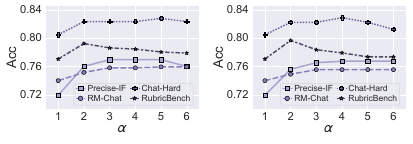}
\vspace{-1em}
\caption{Sensitivity to the fixed ratio $\alpha$ with Gemini-3.5-Flash (left) and GLM-5.1 (right) as the weighting models.}
\label{fig:alpha_weight_models}
\vspace{-5pt}
\end{wrapfigure}

In this subsection, we extend the sensitivity study of Section~\ref{subsec:hyperparameter} to Gemini-3.5-Flash and GLM-5.1 as the weighting models.
The results are presented in Figure~\ref{fig:alpha_weight_models}, and in line with the findings in Section~\ref{subsec:hyperparameter}, we make two observations.
\ding{182} \textbf{ICW improves over uniform weighting across weighting models.}
With both weighting models, ICW outperforms uniform weighting ($\alpha=1$) on all four benchmarks for any tested $\alpha\geq2$.
This indicates that the benefit of pairwise importance comparison does not depend on a particular weighting model.
\ding{183} \textbf{ICW remains robust to the fixed ratio across weighting models.}
Once $\alpha\geq2$, the accuracy remains stable as $\alpha$ further increases for both weighting models.
This confirms that the robustness to $\alpha$ observed with GPT-4.1 in Section~\ref{subsec:hyperparameter} is not tied to a particular weighting model.

\subsection{Cost Analysis}
\label{app:cost}

Table~\ref{tab:cost} reports the token cost of ICW, i.e., the average numbers of input and output tokens consumed by the weighting model per sample.

\begin{table}[ht]
\centering
\caption{Token cost of ICW with different weighting models.}
\label{tab:cost}
\setlength{\tabcolsep}{7.50pt}
\renewcommand{\arraystretch}{1}
\scriptsize
\begin{threeparttable}
\begin{tabular}{llcccccc}
\toprule
\textbf{Weighting Model} & \textbf{Type} & \textbf{RubricBench} & \textbf{ResearchQA} & \textbf{JudgeBench} & \textbf{RM-Chat} & \textbf{Chat-Hard} & \textbf{Precise-IF} \\
\midrule
\multirow{2}{*}{GPT-4.1} & Input & 39.552 & 152.116 & 111.641 & 92.567 & 107.761 & 185.989 \\
 & Output & 1.377 & 2.957 & 1.962 & 1.799 & 1.872 & 2.510 \\
\midrule
\multirow{2}{*}{Gemini-3.5-Flash} & Input & 49.799 & 190.582 & 127.352 & 106.721 & 122.638 & 231.673 \\
 & Output & 18.074 & 41.253 & 22.049 & 17.428 & 17.770 & 29.697 \\
\midrule
\multirow{2}{*}{GLM-5.1} & Input & 61.862 & 180.468 & 113.025 & 94.347 & 110.812 & 197.559 \\
 & Output & 26.279 & 53.043 & 26.773 & 22.474 & 23.030 & 33.580 \\
\bottomrule
\end{tabular}
\begin{tablenotes}
\item \scriptsize \textit{Note}: The numbers are in thousands of tokens per sample, and the output tokens include reasoning tokens.
\end{tablenotes}
\end{threeparttable}
\end{table}

\section{Reproduction Details}
\label{app:reproduction}

\subsection{Dataset Descriptions}
\label{app:dataset}

In this section, we document the datasets adopted in our experiments, which serve two purposes: (1) pairwise preference benchmarks for evaluating rubric weighting methods, and (2) training prompts and evaluation benchmarks for downstream RL.

\paragraph{Pairwise Preference Benchmarks.}
Each benchmark provides user prompts, pairs of candidate responses, and preference labels.
\begin{itemize}[leftmargin=*]
    \item \textbf{RubricBench}~\citep{zhang2026rubricbench}\footnote{\url{https://github.com/planepig/rubricbench}}: A benchmark for rubric-guided evaluation that targets hard samples with nuanced input complexity and misleading surface features, where each sample is annotated with expert-written rubrics derived from the instruction.
    \item \textbf{ResearchQA}~\citep{yifei2026researchqa}\footnote{\url{https://huggingface.co/datasets/realliyifei/ResearchQA}}: A scholarly question answering benchmark distilled from survey articles across research fields, where each query is paired with query-specific rubric items. We use the response pairs with human preference annotations.
    \item \textbf{JudgeBench}~\citep{tan2025judgebench}\footnote{\url{https://github.com/ScalerLab/JudgeBench}}: A benchmark of challenging response pairs spanning knowledge, reasoning, math, and coding, where the preference labels reflect factual and logical correctness.
    \item \textbf{RM-Bench}~\citep{liu2025rmbench}\footnote{\url{https://huggingface.co/datasets/THU-KEG/RM-Bench}}: A reward model benchmark that evaluates sensitivity to subtle content differences and resistance to style biases. We use its Chat subset.
    \item \textbf{RewardBench}~\citep{lambert2025rewardbench}\footnote{\url{https://huggingface.co/datasets/allenai/reward-bench}}: A reward model benchmark spanning chat, reasoning, and safety. We use its Chat-Hard subset, which contains challenging chat pairs with subtle differences.
    \item \textbf{RewardBench 2}~\citep{malik2026rewardbench2}\footnote{\url{https://huggingface.co/datasets/allenai/reward-bench-2}}: A reward model benchmark built on new human prompts. We use its Precise-IF subset, which evaluates whether responses follow precise instruction constraints.
\end{itemize}
For JudgeBench, RM-Chat, Chat-Hard, and Precise-IF, the rubric sets are generated by the RubricARM-8B-Rubric generator of Rubric-ARM~\citep{xu2026rubricarm}\footnote{\url{https://huggingface.co/OpenRubrics/RubricARM-8B-Rubric}}.

\paragraph{Downstream RL Data and Benchmarks.}
\begin{itemize}[leftmargin=*]
    \item \textbf{WildChat}~\citep{zhao2024wildchat}: A corpus of real user--ChatGPT conversations. We use its user prompts for RL training, and generate a rubric set for each prompt with Gemini-3.5-Flash.
    \item \textbf{BiGGen Bench}~\citep{kim2025biggen}: A benchmark that evaluates nine capabilities of language models with instance-specific evaluation criteria. We report the score on a 0--100 scale.
    \item \textbf{AlpacaEval 2.0}~\citep{dubois2024length}: A benchmark of single-turn instructions, for which we report the win rate (WR) against reference answers and the length-controlled win rate (LC).
    \item \textbf{Arena-Hard}~\citep{li2025arenahard}: A benchmark of challenging user prompts, for which we report the win rate (WR) against a baseline model and the style-controlled win rate (SC) on Arena-Hard-Auto v0.1.
\end{itemize}

\subsection{Implementation Details}
\label{app:implementation}

In this section, we supplement the implementation details deferred from the main text.

\paragraph{Consistency Check Settings.}
We compute CR in Eq.~(\ref{eq:consistency}) with the random indices $\mathrm{RI}_K$ of~\citet{saaty1980analytic} for $K\le15$.
For $K>15$, we use $\mathrm{RI}_{15}$.

\paragraph{Downstream RL Settings.}
The RL experiments are implemented on top of the \texttt{verl} library\footnote{\url{https://github.com/volcengine/verl}.}.
We adopt GRPO~\citep{shao2024deepseekmath} as the policy optimization algorithm, and initialize the policy models from Qwen2.5-3B-Instruct, Qwen2.5-7B-Instruct~\citep{qwen2024qwen25}, and Qwen3-4B~\citep{yang2025qwen3}, with the thinking mode enabled for Qwen3-4B.
User prompts from WildChat~\citep{zhao2024wildchat} are used for policy training.
All methods share the same rubric sets, and the weights are obtained with GPT-4.1 as the weighting model.
The reward of each response is its evaluation score in Eq.~(\ref{eq:rubric_score}) with GPT-4.1-mini as the verifier.
The training is configured with a batch size of 128, 8 rollouts per prompt, and a sampling temperature of 1.0.
The policy models are optimized with a learning rate of $1\times10^{-6}$ and a KL coefficient of $1\times10^{-3}$ for 3 epochs.
Prompt and response are capped at 2{,}048 and 9{,}216 tokens, respectively.
For evaluation, GPT-4.1 is used as the judge on all benchmarks.

\paragraph{Prompt Templates.}
Prompts~\ref{prompt:pairwise}--\ref{prompt:verifier} present the prompts used in our experiments.
The weighting model compares each criterion pair with Prompt~\ref{prompt:pairwise}, where \{revision\_feedback\} is empty in the initial comparisons and is replaced by Prompt~\ref{prompt:revision} during revision.
The verifier checks each criterion for each response with Prompt~\ref{prompt:verifier}, for both the pairwise preference benchmarks and downstream RL.

\begin{mybox}[label=prompt:pairwise]{Prompt}{Pairwise importance comparison prompt of the weighting model.}
\footnotesize
\setlength{\parskip}{4pt}
\textbf{System message:}
You compare the importance of two rubric criteria for one user prompt. Importance means scoring weight: when grading candidate answers to this prompt, failing a more important criterion must cost more credit. Use only the user prompt and the fixed rubric set. Never guess at candidate answers, human preferences, or verifier outcomes.

\medskip
\textbf{User message:}

User prompt:\\
\{prompt\}

Full rubric set (fixed context for this sample):\\
\{full\_rubric\_set\}

Compare these two rubrics:\\
first (\{first\_rubric\_id\}): \{first\_criterion\}\\
second (\{second\_rubric\_id\}): \{second\_criterion\}

Which highlighted rubric should carry more weight when grading answers to this prompt?

General rules:\\
- First list what the prompt itself fixes: the requested result and every explicit must, must-not, count, format, or procedural rule. A rubric tied to any of them outranks generic quality.\\
- Rank two rubrics only when the prompt gives a reason; never build a preference from plausible reasoning alone. If neither rubric is favored by the prompt, answer "neutral".\\
- Judge the concrete behavior each rubric uniquely requires, not wording such as detailed, complete, robust, or professional.\\
- Use the full rubric set: a duplicate, vague umbrella, or whole-task restatement is less important than a specific unique requirement.\\
- For a negatively phrased rubric, judge the value of avoiding that defect; do not reverse its meaning.\\
- First decide which rubric\_id wins; your answer must not change if the two rubrics were displayed in the opposite order.

\{revision\_feedback\}

Return JSON only:\\
\{"more\_important": "first" | "second" | "neutral", "reason": "<= 25 words"\}
\end{mybox}

\begin{mybox}[label=prompt:revision]{Prompt}{Revision prompt of the weighting model.}
\footnotesize
\setlength{\parskip}{4pt}
Consistency review: the initial judgments for every rubric pair follow. Each forward/reverse entry preserves its display order, winner\_rubric\_id, and original reason. A null winner means that call answered neutral. The status distinguishes:\\
- agreed\_winner: both display orders support the same rubric\_id;\\
- both\_neutral: both calls report no supported importance advantage;\\
- one\_order\_neutral: only one call is neutral;\\
- opposite\_winners: the two calls support different rubric\_ids.\\
The last two statuses indicate order instability. Their null collapsed winner does not establish that the rubrics are equally important. Even both\_neutral can reflect insufficient evidence rather than an established equality.

Initial comparison evidence (fallible prior judgments, not ground truth):\\
\{previous\_round\_history\}

Re-judge the highlighted pair using the user prompt and complete rubric set, checking both previous reasons against the prompt. Use the other pairs as context, not as votes or authoritative evidence. Keep a supported judgment; revise it when its reason is unsupported, misreads a requirement, or overlooks a relevant requirement. Explain the concrete prompt or rubric evidence for your judgment. A request for review does not imply that the highlighted pair's initial judgment was wrong. Seek coherent importance judgments, but do not invent a preference merely to remove a conflict. Answer "neutral" when neither rubric is supported as more important. Return a judgment only for the highlighted pair.
\end{mybox}

\begin{mybox}[label=prompt:verifier]{Prompt}{Prompt of the verifier.}
\footnotesize
\setlength{\parskip}{4pt}
\textbf{System message:}
You are a strict rubric verifier. Given one user prompt, one response, and ONE evaluation criterion, decide whether the response satisfies that criterion. Judge only this local criterion, not overall response quality, and never compare against another response. Mark satisfied=1 only when the response clearly and completely satisfies the criterion; partial satisfaction is 0. Return strict JSON only.

\medskip
\textbf{User message:}

Original user prompt:\\
\{prompt\}

Response to verify:\\
\{response\}

Single criterion:\\
\{criterion\}

Verification rules:\\
- Judge only the criterion's stated local boundary. Do not fail it merely because another part of the overall task is weak or missing.\\
- Preserve exact quantifiers: exactly, at least, at most, only, every, and each are not interchangeable.\\
- If the criterion forbids or says to avoid something, satisfied=1 only when the prohibited condition is absent.\\
- Judge meaning and response evidence, not keyword overlap.\\
- The reason must be one concise final justification. The satisfied bit and reason must agree.

Return exactly:\\
\{"satisfied": 1, "reason": "short evidence-based reason"\}
\end{mybox}

\end{document}